\documentclass{bmvc2k}

\usepackage{bmvc2k_natbib}

\usepackage{microtype}
\usepackage{graphicx}
\usepackage{booktabs} 
\usepackage{hyperref}

\usepackage{amsmath}
\usepackage{mathtools}
\usepackage{amsthm}
\usepackage{dsfont}

\usepackage{xcolor, booktabs, multirow, array, placeins, pifont, enumitem, bm, soul, nicefrac, bbold, boldline, wrapfig, graphicx, comment, xspace, colortbl}

\usepackage{anyfontsize}

\definecolor{ref-color}{HTML}{C90016}

\definecolor{eq-fig-tab-color}{HTML}{D70040}

\definecolor{third-best}{HTML}{138808}
\definecolor{sec-best}{HTML}{002FA7}
\definecolor{best}{HTML}{FF033E}

\definecolor{col-color}{HTML}{C9C0BB}

\definecolor{zvad-color}{rgb}{0.886274509803922,0.290196078431373,0.2}
\definecolor{other-color}{rgb}{0.203921568627451,0.541176470588235,0.741176470588235}

\definecolor{ped1-color}{rgb}{0.886274509803922,0.290196078431373,0.2}
\definecolor{ped2-color}{rgb}{0.203921568627451,0.541176470588235,0.741176470588235}
\definecolor{ave-color}{rgb}{0.29, 0.33, 0.13}

\definecolor{bar-color}{HTML}{273BE2}

\definecolor{checkmark}{HTML}{40826D}
\definecolor{xmark}{HTML}{E62020}

\makeatletter
\def\dual#1{\expandafter\dual@aux#1\@nil}
\def\dual@aux#1/#2\@nil{\begin{tabular}{@{}c@{}}#1\\#2\end{tabular}}
\makeatother

\makeatletter
\newcommand\sfsize{\@setfontsize\sfsize\@viipt\@viiipt}
\makeatother

\makeatletter
\renewcommand{\paragraph}{%
  \vspace{-.8em} 
  \@startsection{paragraph}{4}%
  {\z@}{2ex \@plus 1ex \@minus .2ex}{-0.5em}%
  {\normalfont\normalsize\bfseries}%
}
\makeatother

\definecolor{OursColor}{rgb}{1.0,0.88,0.88}

\usepackage{arydshln}
\makeatletter
\def\adl@drawiv#1#2#3{%
        \hskip.5\tabcolsep
        \xleaders#3{#2.5\@tempdimb #1{1}#2.5\@tempdimb}%
                #2\z@ plus1fil minus1fil\relax
        \hskip.5\tabcolsep}
\newcommand{\cdashlinelr}[1]{%
  \noalign{\vskip\aboverulesep
           \global\let\@dashdrawstore\adl@draw
           \global\let\adl@draw\adl@drawiv}
  \cdashline{#1}
  \noalign{\global\let\adl@draw\@dashdrawstore
           \vskip\belowrulesep}}
\makeatother

\definecolor{darkpastelgreen}{rgb}{0.0, 0.5, 0.0}
\definecolor{britishracinggreen}{rgb}{0.0, 0.26, 0.15}

\definecolor{blush}{rgb}{0.87, 0.36, 0.51}

\newcommand{\ourso}{ContCore (\textbf{ours})\xspace}
\newcommand{\ours}{ContCore\@\xspace}

\makeatletter
\DeclareRobustCommand\onedot{\futurelet\@let@token\@onedot}
\def\@onedot{\ifx\@let@token.\else.\null\fi\xspace}

\def\eg{\emph{e.g}\onedot} 
\def\ie{\emph{i.e}\onedot}

\newcommand*\bigcdot{\mathpalette\bigcdot@{.5}}
\newcommand*\bigcdot@[2]{\mathbin{\vcenter{\hbox{\scalebox{#2}{$\m@th#1\bullet$}}}}}
\makeatother

\RequirePackage{amsmath}
\RequirePackage{amssymb}
\RequirePackage{amsthm}
\RequirePackage{bm} 
\RequirePackage{url}
\usepackage{multirow}

\newcommand{\cG}{\mathcal{G}}

\newcommand{\cM}{\mathcal{M}}

\newcommand{\cO}{\mathcal{O}}

\newcommand{\cZ}{\mathcal{Z}}

\usepackage{pifont}
\usepackage{bbm}
\usepackage{graphicx}
\usepackage{enumitem}
\usepackage{amssymb}
\usepackage{wrapfig}

\definecolor{checkmark}{HTML}{305AFF}
\definecolor{xmark}{HTML}{E62020}

\title{Memory-Bounded Continuation of Greedy Sampling for Continual Anomaly Detection}

\addauthor{Yoon Gyo Jung${}^{*}$}{jung.yoo@northeastern.edu}{1}
\addauthor{Jaewoo Park${}^{*\dagger}$}{park.jaewoo@aivexvision.ai}{2}
\addauthor{Kuan-Chuan Peng}{kpeng@merl.com}{3}
\addauthor{Seongdeok Bang}{bang.seongdeok@aivexvision.ai}{2}
\addauthor{Octavia Camps${}^{\ddagger}$}{o.camps@northeastern.edu}{1}

\addinstitution{
 Electrical and Computer Engineering Department\\
 Northeastern University\\
 Boston, MA, USA
}
\addinstitution{
 AIVEX Inc.\\
 Seoul, South Korea
}
\addinstitution{
 Mitsubishi Electric Research Laboratories (MERL)\\
 Cambridge, MA, USA
}

\runninghead{Jung et al}{Memory-Bounded Continuation of Greedy Sampling for CAD}

\def\eg{\emph{e.g}\bmvaOneDot}

\theoremstyle{plain}
\newtheorem{theorem}{Theorem}[section]
\newtheorem{proposition}[theorem]{Proposition}

\theoremstyle{definition}

\theoremstyle{remark}

\begin{document}

\maketitle

\begin{abstract}
Greedy sampling produces a compact yet representative summary of normal data, which is essential for reliable anomaly detection that relies on measuring distance from normality. For continual anomaly detection where tasks arrive sequentially, extending greedy sampling is straightforward with unbounded memory through coreset accumulation. However, practical deployment requires fixed memory where the coreset size remains constant regardless of task count. We observe that \emph{continued greedy sampling}, which iteratively applies greedy selection over previously greedy-sampled sets, effectively preserves representativeness under strict memory limits. Despite discarding data at each step to satisfy the memory constraint, coreset quality degrades gracefully rather than catastrophically, enabling reliable anomaly detection across the tasks. We provide theoretical justification by showing that resulting \emph{greedy-continued coreset} approximates the oracle coreset within a bounded gap. We instantiate this principle in \ours, which constructs a greedy-continued coreset through greedy expansion on new task features followed by greedy consolidation to enforce the memory budget. Unlike neural methods susceptible to catastrophic forgetting or naive coreset accumulation requiring unbounded memory, \ours maintains fixed memory with theoretical guarantees. Empirically, \ours achieves state-of-the-art performance across 11 task schedules on MVTecAD and VisA, and extends effectively to online continual AD settings where prior methods degrade significantly. Code: \url{https://github.com/jungyg/ContCore}.
\end{abstract}

\section{Introduction}
\label{sec:intro}
Solving unsupervised anomaly detection is vital in various fields such as manufacturing, cybersecurity \cite{cyberad}, and medical applications \cite{medad, medianomaly} due to its productivity and cost reduction. However, discovering the decision boundary between normal and unbounded anomalies makes it challenging. Early models \cite{draem, padim, patchcore, effad, realnet, simplenet, diffad} achieve high performance by training a separate model for each class. However, they require costly annotations, long training time, and large model sizes. Multi-class anomaly detection (AD) methods \cite{uniad, hvq} address these issues by training a single model on multiple classes. This provides faster training and eliminates the need for class labeling. However, they fail in dynamic environments, where new samples and tasks are added, as they suffer from catastrophic forgetting \cite{ewc}.

\begin{figure*}[t]
    \centering
    \begin{minipage}{1\textwidth}
        \subfigure[\label{fig:fm}]{%
            \includegraphics[width=0.32\linewidth]{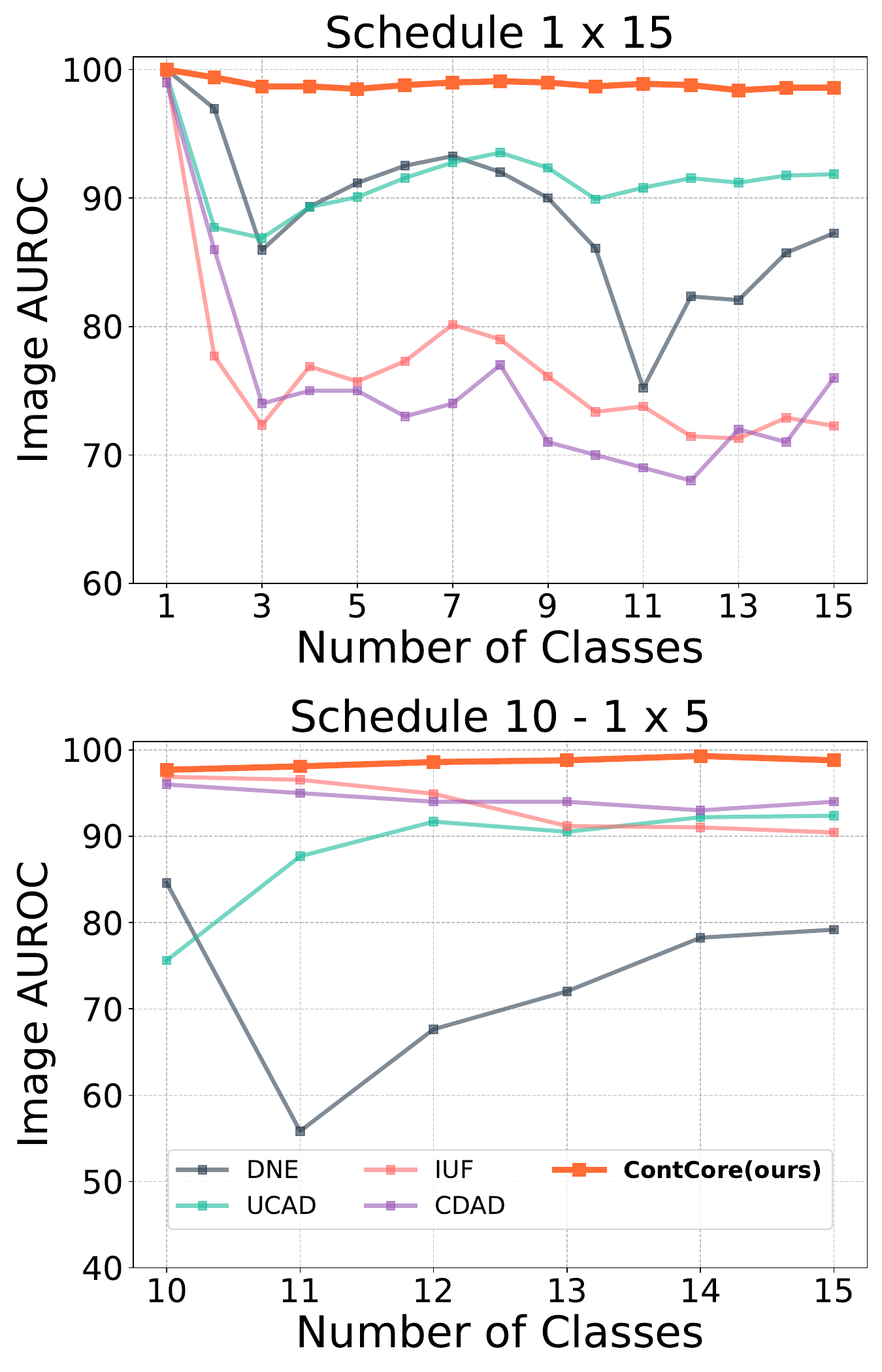}%
        }
        \hfill
        \subfigure[\label{fig:efficiency}]{%
            \includegraphics[width=0.67\linewidth]{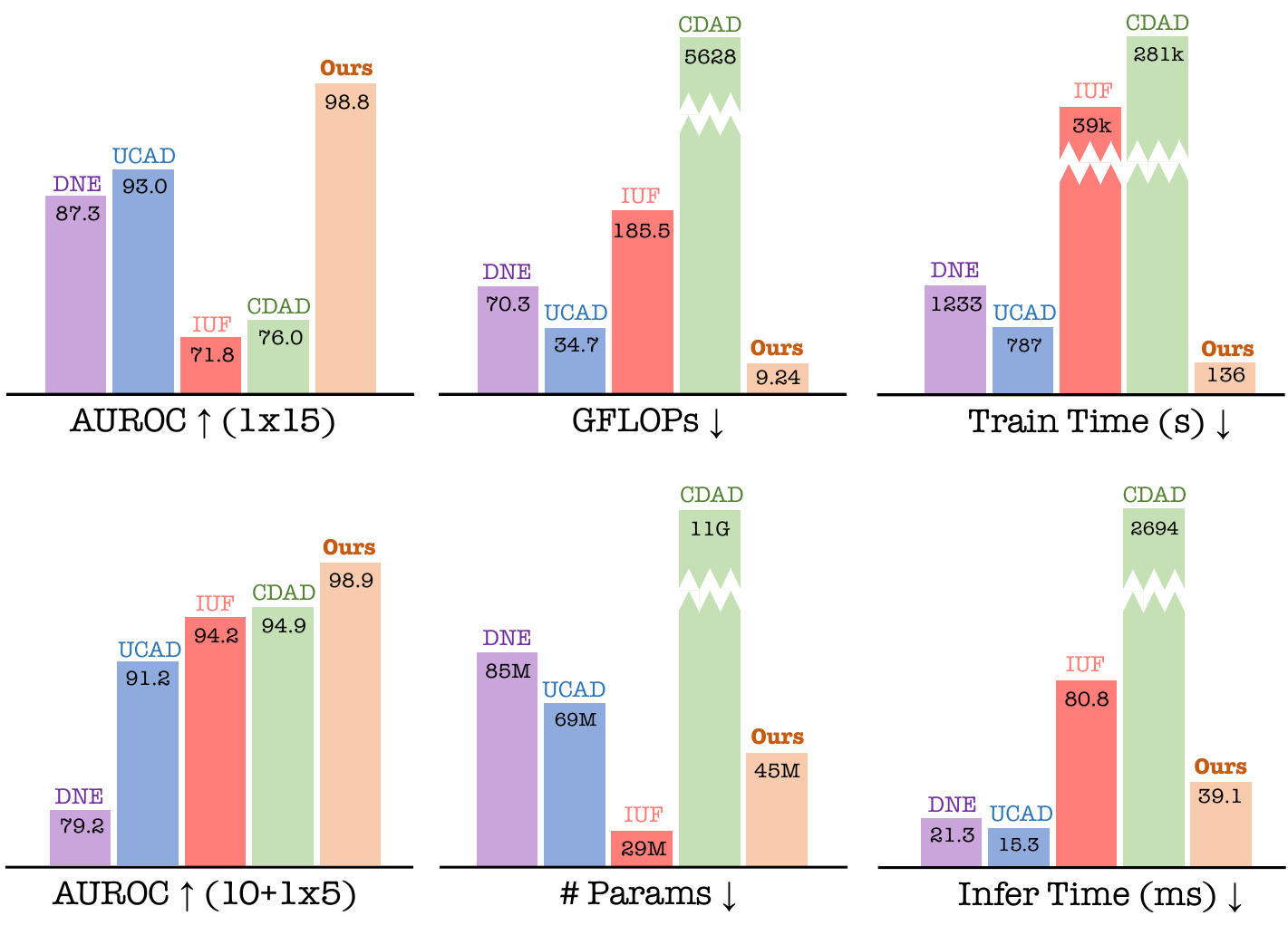}%
        }
    \end{minipage}
    \caption{
        (a) Comparison of task average image-AUROC for CAD baselines and \ours on $1 \times 15$ and $10-1\times5$ task schedules on MVTecAD, which demonstrates the effect of catastrophic forgetting in continual learning. (b) Efficiency comparison of CAD methods.
    }
    \label{fig:fm_and_efficiency}
\end{figure*}

The key challenge in Continual Anomaly Detection (CAD) is preventing catastrophic forgetting and holding the legacy information while detecting newly introduced anomalies under the unsupervised setup (\ie, without class labels and given only normal training images). 
A naive approach is applying either regularization-based \cite{ewc, si, mas} or replay buffer-based \cite{experience_replay, grad_memory, brain_memory} continual learning methods to network-based multi-class AD \cite{uniad, hvq, dinomaly}. While regularization-based methods \cite{ewc, si, mas} preserve past information by regularizing related parameters from past tasks, they struggle to balance stability and plasticity of the parameters. Moreover, they are highly sensitive to the hyperparameters that controls the trade-off between old and new tasks. Replay-based methods \cite{experience_replay, grad_memory, brain_memory} store data from previous tasks in a replay-buffer and replay along with the novel tasks. However, this approach faces scalability issues, as memory usage grows proportionally with the number of tasks. In addition, its performance heavily depends on the quality of the replay samples.

Coreset-based AD \cite{padim, patchcore} builds a coreset memory composed of patch features extracted from a given task and directly uses it for inference. This coreset memory differs from the memory of replay buffer-based methods as it is not used for re-training and the objective is coverage of normal features. These methods can solve continual tasks by adding newly extracted features into the coreset memory without suffering from catastrophic forgetting. However, like replay-based methods, they also struggle with memory overflow as more tasks are added. 

To mitigate these issues, dedicated solutions for CAD \cite{dne, ucad, iuf, cdad} have been proposed. Memory-based methods such as DNE \cite{dne} and UCAD \cite{ucad} handle multiple steps of simple tasks well, but struggle on complex ones (\eg, 10-class task in Fig.~\ref{fig:fm} bottom). Conversely, parameter regularization-based methods such as CDAD~\cite{cdad} and IUF~\cite{iuf} use iterative SVD to project the gradients of new tasks into the null space of parameters to minimize changes to previous task information, but still suffer from forgetting with a large number of tasks (Fig.~\ref{fig:fm} top). 

A natural question arises: can we design a bounded coreset that preserves the representational quality of an unbounded one? Greedy sampling selects maximally separated points, producing a compact yet representative summary of normal data~\cite{patchcore}. Since anomaly detection relies on measuring distance from normality, a representative coreset is essential for reliable detection. Extending this property to the continual setting under fixed memory is non-trivial. We observe that \textit{continued greedy sampling}, which iteratively applies greedy selection over previously greedy-sampled sets, effectively preserves representativeness under strict memory limits. We call the resulting set a \emph{greedy-continued coreset} and provide theoretical justification by showing it approximates the oracle coreset within a bounded gap.

We implement this principle through \ours, which constructs the greedy-continued coreset via two steps: greedy expansion samples features from the new task that are maximally distant from the current coreset, and greedy consolidation enforces the memory constraint while preserving representativeness.

Our contributions are:
\begin{itemize}
\item We observe that continued greedy sampling effectively preserves representativeness under strict memory limits, enabling continual AD without unbounded memory growth. We provide theoretical justification by showing the resulting greedy-continued coreset approximates the oracle coreset within a bounded gap.

\item We instantiate this principle in \ours, which constructs a \emph{greedy-continued coreset} that maintains representativeness of normal data across task transitions. \ours achieves state-of-the-art performance across 11 task schedules on MVTecAD and VisA.

\item We validate the approach in online continual AD settings, demonstrating robustness under stricter conditions where prior methods degrade significantly.
\end{itemize}

\section{Related works}
\label{sec:related_works}

\noindent\textbf{Anomaly detection (AD)} Single-class AD methods \cite{draem, padim, rd4ad, patchcore, simplenet, effad, diffad} and multi-class \cite{uniad, hvq, realnet, dinomaly, diad} mainly use reconstruction \cite{draem, rd4ad, effad, diffad, uniad, hvq, realnet, diad, dinomaly} or memory-based methods \cite{padim, patchcore}. Reconstruction-based methods aim to reconstruct only normal images and measure anomaly scores based on the difference between the input and reconstruction. Moreover, multi-class methods aim to address the ``identity shortcut'' issue, which is caused by the decoder processing more various features, even similar to other classes' anomalies, as more classes are shown. However, these recent works lack any explicit mechanisms designed for a continual learning setup, suffering from catastrophic forgetting.

\paragraph{Continual learning}
The main challenge in continual learning is avoiding catastrophic forgetting, where a model's performance on past tasks degrades upon learning new ones. Two method types have been proposed to address this: regularization-based and replay-based methods. Regularization-based methods \cite{ewc, si, mas} use a loss function to prevent important parameters from old tasks from diluting, but often struggle to balance stability and plasticity. Replay-based methods \cite{experience_replay}, inspired by biological mechanisms \cite{brain_memory}, store a subset of past data in a memory buffer to interleave with new data during training. Though effective, they scale poorly as memory grows with task count, and their performance heavily depends on replay-sample quality \cite{grad_memory, latent_replay}.

\paragraph{Continual anomaly detection (CAD)}
DNE \cite{dne} models each task features with the mean vector and covariance matrix of the assumed Gaussian distributions for fast, lightweight training and inference, but struggles with complex, multi-class tasks and lacks pixel-level AD ability. UCAD \cite{ucad} uses a memory bank with key-prompt modules, continual prompt modulation, and Segment Anything Model (SAM)-based \cite{sam} pseudo-label contrastive learning. Its performance heavily relies on SAM’s segmentation quality, which is used for training the prompts, where wrong segmentations would affect the representations of the prompt. Both DNE and UCAD struggle with multi-class tasks.

IUF \cite{iuf} uses object-aware self-attention and semantic compression loss to preserve key semantics across tasks, but requires class labels, limiting it to supervised settings, unlike most multi-class AD methods \cite{uniad, hvq}, which assume that the class information is not given. CDAD \cite{cdad} employs a diffusion model with the anomaly-masked network, which enhances the conditioning mechanism, and memory-efficient gradient projection via SVD. However, it is computationally heavy with slow training and inference, as shown in Fig.~\ref{fig:efficiency}. Both IUF and CDAD suffer from catastrophic forgetting when facing long-term continual task schedules.

\begin{figure*}[t]
\centering
\includegraphics[width=1\linewidth]{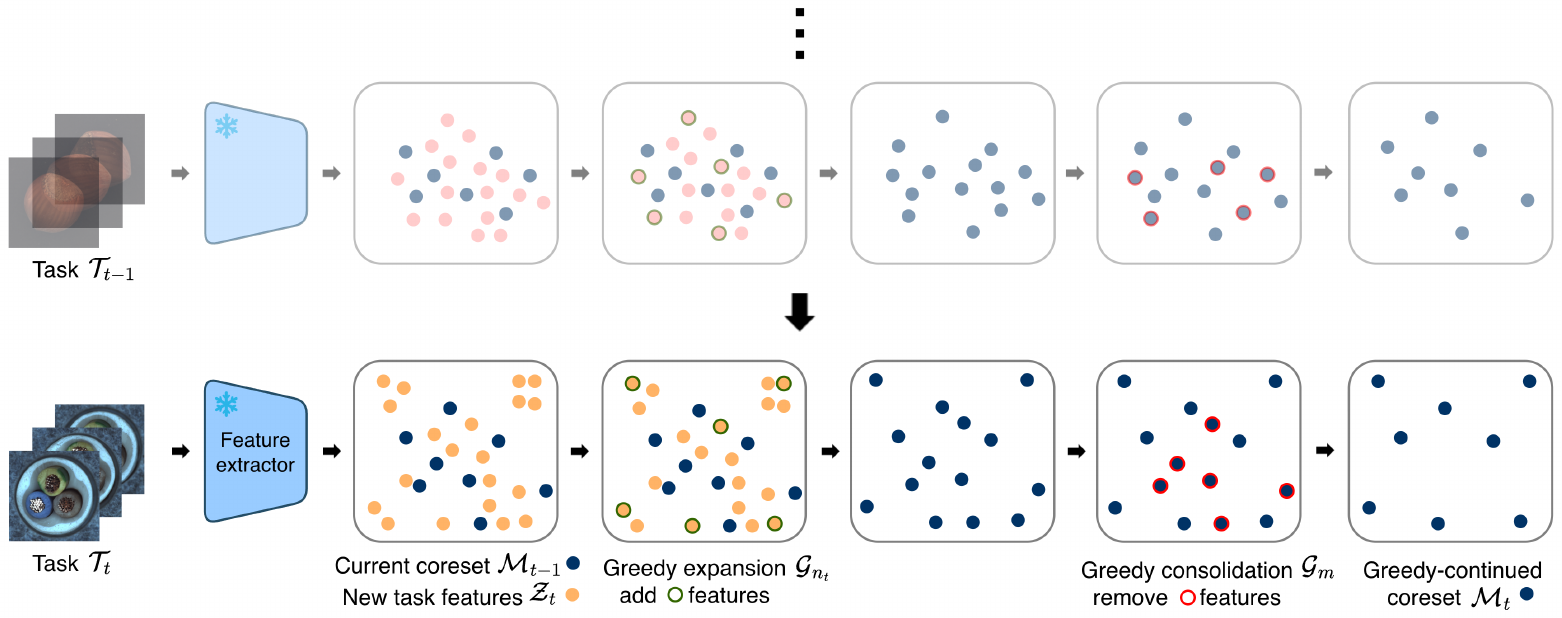}
\caption{
Memory-bounded continuation of greedy sampling via \ours. Greedy expansion selects maximally distant features from new task features relative to the current coreset (green circles). Greedy consolidation enforces the memory constraint by selecting maximally separated features from the combined set (red circles), producing a greedy-continued coreset that preserves coverage across tasks.
}
\label{fig:method}
\end{figure*}

\section{Preliminaries}
\paragraph{Greedy sampling} is a subset-selection operation $\mathcal{G}$ that, given a set $\mathcal{Z}$ of latent embedding vectors as an input, returns a subset with maximized distances as follows: 
\begin{equation}
\mathcal{G}_n(\mathcal{Z})
= \{
z_{i}^* \in \mathcal{Z}:
z_{i}^*
= 
\underset{z \in \mathcal{Z}}{\arg \max}
d(z, \mathcal{Z}_{i-1}^*)
\}
\end{equation}
for $2 \leq i \leq n $, where $n$ is the sampling size and $\mathcal{Z}_{i-1}^*=\{z_1^*, \dots, z_{i-1}^*\}$, where the set distance between a vector $z$ and the set $\mathcal{Z}_{i-1}^*$, $d$, is defined as the minimum Euclidean distance to the vectors in the set.
The initial point $z_1^*$ is chosen randomly or as one whose distance summation to all points is maximal. To simplify notations, we set $\mathcal{G}_n(\mathcal{Z}) = \mathcal{Z}$ if the data size is not greater than the sample size $n$ (\ie, $|\mathcal{Z}| \leq n$).

\paragraph{Continual anomaly detection} Unsupervised CAD consists of a sequence $(\mathcal{T}_1, \; \dots, \; \mathcal{T}_T)$ of $T$ tasks, where \textit{the number of tasks $T$ is unknown}. Each task $T_t$ is paired with its training data $\mathcal{D}_t$, which has only normal samples without task or class labels. At the $t$-th task, the model trainer has access to the model from the previous $(t{-}1)$-th task, but has no direct access to the data from the previous tasks.

\section{Method}
We implement \ours, a memory-based anomaly detector that constructs a greedy-continued coreset via two steps: greedy expansion and greedy consolidation. Greedy expansion samples features from the new task that are maximally distant from the current coreset, and greedy consolidation enforces the memory constraint while preserving representativeness. The greedy-continued coreset is the key to continual AD: it maintains a representative summary of all observed normal data under fixed memory. The fact that the base set $B$ is arbitrary enables greedy sampling to generalize across tasks by setting $B$ as the current coreset.

\subsection{Greedy sampling on a base set}
The original greedy sampling $\mathcal{G}_n$ starts from an empty set $\emptyset$, and inductively collects the points that have maximal distance to the set collected at hand. The notion of greedy sampling can be extended by a base set $B$, where the greedy sampling starts from a non-empty base set $B$:

\fontsize{9.9pt}{10pt}
\begin{equation}
\mathcal{G}_n(\mathcal{Z}; B) 
= \{z_{i}^*\in\mathcal{Z}:
z_{i}^*
= 
\underset{z \in \mathcal{Z} }{\arg \max}
d(z, \mathcal{Z}_{i-1}^* \cup B)
\}
\end{equation}
\normalsize
for $2 \leq i \leq n$. The initial point $z_1^*$ is given as  $z_1^* = \arg \max_{z \in \mathcal{Z}}\; d(z, \; B)$.

The extended greedy sampling differs from the original in that a point being collected is compared to both the sampled set $\{z_1^*, \; \dots, \; z_{i-1}^*\}$ and the base set $B$. Hence, under this definition, the original greedy sampling is sampling with the empty base set, \ie, $\mathcal{G}_n(\mathcal{Z})$ is equal to $\mathcal{G}_n(\mathcal{Z}; \; \emptyset)$. Note that the base set $B$ can be any set of vectors of the same dimension. Depending on the choice of the base set, however, the final sampled output can differ.

\subsection{Greedy-Continued Coreset for CAD}
A naive application of greedy sampling in the continual learning scenario would result in a sampled set with size proportional to the number of tasks $T$. In this case, the memory and computation cost grow unboundedly, making it impractical for many tasks. Therefore, we propose \ours, which constructs a greedy-continued coreset via memory-bounded continuation of greedy sampling as follows:

Given a set $\mathcal{Z}_t$ of patch embedding features from training set $\mathcal{D}_t$ of the $t$-th task, \ours at $t$-th task is performed by
\begin{equation}
\label{eq:meta_greedy_sampling}
\mathcal{M}_t = \mathcal{G}_m
(
\mathcal{G}_{n_t}
(\mathcal{Z}_t; \; \mathcal{M}_{t-1}) \cup \mathcal{M}_{t-1}
),
\end{equation}
where the memory $\mathcal{M}_t$ is inductively computed starting with $\mathcal{M}_{0} = \emptyset$, subscript $n_t$ on greedy expansion is the sampling size at $t$-th task and subscript $m$ on greedy consolidation is the maximum coreset size. We denote the overall \ours operation in Eq.~\eqref{eq:meta_greedy_sampling} by $ \mathcal{M}_t = \mathcal{G}_{m, n_t}( \mathcal{Z}_t; \; \mathcal{M}_{t-1})$.

Thus, \ours in Eq.~\eqref{eq:meta_greedy_sampling} consists of greedy expansion $\mathcal{G}_{n_t}(\cdot ; \; \mathcal{M}_{t-1})$ and greedy consolidation $\mathcal{G}_m(\cdot)$. Fig.~\ref{fig:method} shows that greedy expansion selects features from the new task that are maximally distant from the current coreset, while greedy consolidation enforces the memory constraint by selecting maximally separated features from the combined set. This two-step process constructs a greedy-continued coreset at each task transition. The constrained optimization criterion in Eq.~\eqref{eq:meta} in Sec.~\ref{sec:meta_learning} further clarifies the structure-preserving nature of \ours.

\paragraph{Why greedy-continued coresets enable continual AD} Greedy sampling selects maximally separated points, producing a compact yet representative summary of normal data~\cite{patchcore}. This representativeness is essential for effective AD, since anomaly detection relies on measuring distance from normality. The central question for continual AD is whether representativeness can be maintained as tasks accumulate under bounded memory. We observe that continued greedy sampling effectively preserves representativeness across task transitions. Theorem~\ref{theorem} provides theoretical justification, showing the representational gap to the oracle coreset remains bounded.

\subsection{Implementation in practice}
\noindent\textbf{Improving efficiency by approximation.}
\label{sec:approx_meta_greedy}
The greedy consolidation step inevitably increases computation time. To mitigate this, we approximate greedy sampling using nearest neighbors. The key insight is that after greedy expansion, the combined features are already mostly representative. Thus, instead of strict costly greedy sampling, faster nearest neighbor can sufficiently measure distinctiveness.

Upon this intuition, during greedy consolidation, a fraction $q$ of the samples are selected by nearest neighbors instead of full greedy sampling. Specifically, we compute the nearest neighbor distance $d_i$ for each embedding $z_i$ in the set $\widehat{\mathcal{M}}_t$ from greedy expansion and the current coreset;
\begin{equation}
d_i = d(z_i, \; \widehat{\mathcal{M}}_t \setminus \{ z_i \})
\end{equation}
with $ \widehat{\mathcal{M}}_t = \mathcal{G}_{n_t}
(\mathcal{Z}_t; \; \mathcal{M}_{t-1}) \cup \mathcal{M}_{t-1}$. Then, by sorting $d_i$ in descending order, we select the first fraction $q$ of the embeddings with highest nearest distances.

\subsection{Inference}
To detect anomalies in the inference stage of the $t$-th task, we compute the anomaly score based on the nearest distance as shown in PatchCore \cite{patchcore}. In particular, given a test sample, we extract its patch embeddings $z^{(h, w)}$ for the $(h, w)$-th patch and compute the anomaly score of $(h, w)$-th patch by
\begin{equation}
a^{(h, \; w)} = d(z^{(h, \; w)}, \; \mathcal{M}_t).
\end{equation}
If the anomaly score for a particular patch is higher than a threshold, then that region is considered to contain an anomaly. If the maximum of anomaly scores across all patches is greater than a threshold, then the given test sample is considered to be an anomaly.

\subsection{Theoretical analysis}
\label{sec:theorem}
We provide theoretical justification for our observation. The representational gap between the greedy-continued coreset and the oracle coreset (greedy sampling applied to all task data) remains bounded.
\begin{theorem}
Suppose $H(\mathcal{O}, \; \cup_{t=1}^T \mathcal{Z}_t) \leq \epsilon_o$, and assume $H(S_t, \; \mathcal{Z}_t) \leq \epsilon_t$, and $H(\mathcal{M}_t, \; S_t \cup \mathcal{M}_{t-1}) \leq \widehat{\epsilon}_t$
for all $t=1,\dots,T$ with $S_1 = \mathcal{M}_1$, $\mathcal{M}_0 = \emptyset$, and $\widehat{\epsilon}_0 = 0$,
where $H$ is the Hausdorff distance, $\mathcal{O} = \mathcal{G}_{n_0}(\cup_{t=1}^T \mathcal{Z}_t)$, $S_t = \mathcal{G}_{n_t}(\mathcal{Z}_t; \; \mathcal{M}_{t-1})$, and $\mathcal{M}_t = \mathcal{G}_{m_t}(S_t \cup \mathcal{M}_{t-1})$.
Then, we have
\begin{equation}
\label{eq:inequality}
H(\mathcal{O}, \; \mathcal{M}_T)
\leq
\epsilon_o +
\max_{1 \leq k \leq T} \left( \epsilon_k + \sum_{j=k}^T \widehat{\epsilon}_j \right)
\end{equation}
for sufficiently large $n_o \leq | \cup_{t=1}^T \mathcal{Z}_t |$, $n_t \leq |\mathcal{Z}_t|$, and $m_t \leq |S_t \cup \mathcal{M}_{t-1}|$.
\label{theorem}
\end{theorem}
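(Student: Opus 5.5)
The plan is to combine the triangle inequality for the Hausdorff distance with an induction over tasks. First, the triangle inequality through $\cup_{t=1}^T \mathcal{Z}_t$ gives
\[
H(\mathcal{O}, \mathcal{M}_T) \leq H\bigl(\mathcal{O}, \textstyle\cup_{t=1}^T \mathcal{Z}_t\bigr) + H\bigl(\textstyle\cup_{t=1}^T \mathcal{Z}_t, \mathcal{M}_T\bigr) \leq \epsilon_o + H\bigl(\textstyle\cup_{t=1}^T \mathcal{Z}_t, \mathcal{M}_T\bigr).
\]
So it suffices to show that
\[
H\bigl(\textstyle\cup_{t=1}^T \mathcal{Z}_t, \mathcal{M}_T\bigr) \leq \max_{1\le k\le T}\bigl(\epsilon_k + \textstyle\sum_{j=k}^T \widehat{\epsilon}_j\bigr).
\]
I would prove the general version, with $T$ replaced by any $t$, by induction on $t$.

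Two elementary facts about Hausdorff distance will be used. The first is the union bound $H(A\cup A', B\cup B') \leq \max(H(A,B), H(A',B'))$. The second is the triangle inequality.

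Write $U_t = \cup_{s\le t}\mathcal{Z}_s$ and $\delta_t = \max_{1\le k\le t}\bigl(\epsilon_k + \sum_{j=k}^t \widehat{\epsilon}_j\bigr)$. The inductive step is
\[
H(U_t, \mathcal{M}_t) \leq H(U_t, S_t\cup\mathcal{M}_{t-1}) + H(S_t\cup \mathcal{M}_{t-1}, \mathcal{M}_t),
\]
and the last term is at most $\widehat{\epsilon}_t$ by hypothesis. By the union bound,
\[
H(U_t, S_t\cup\mathcal{M}_{t-1}) \leq \max\bigl(H(\mathcal{Z}_t,S_t),\, H(U_{t-1},\mathcal{M}_{t-1})\bigr) \leq \max(\epsilon_t, \delta_{t-1}).
\]
Then $\max(\epsilon_t,\delta_{t-1}) + \widehat{\epsilon}_t = \delta_t$: the new term $k=t$ contributes $\epsilon_t+\widehat{\epsilon}_t$, and every old term $k<t$ gets $\widehat{\epsilon}_t$ appended to its sum.

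The base case $t=1$ uses $\mathcal{M}_0=\emptyset$ and $S_1=\mathcal{M}_1$. These give $H(\mathcal{Z}_1,\mathcal{M}_1)\le \epsilon_1 \le \epsilon_1+\widehat{\epsilon}_1 = \delta_1$. Equivalently, one can start the induction at $t=0$ with the convention $\widehat{\epsilon}_0=0$ and treat the empty union carefully.

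The main obstacle is the union bound in the presence of $\mathcal{M}_{t-1}$. That set is not a subset of $\mathcal{Z}_t$, and $\emptyset$ has infinite Hausdorff distance to anything nonempty, so the step at $t=1$ must avoid pairing $\emptyset$ with $U_0$. I would state the union lemma only for nonempty pairs and handle $t=1$ separately as above.

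I also need to check the union lemma itself: every point of $A\cup A'$ is within $\max(\cdot)$ of $B\cup B'$, and conversely. This follows directly from the sup–inf definition.

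The role of "sufficiently large $n_o,n_t,m_t$" is only to ensure that the assumed $\epsilon$ bounds are attainable and finite. These are the hypotheses, so no further argument is needed beyond noting that the sampled sets are nonempty.
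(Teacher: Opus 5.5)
Your proposal is correct and follows the paper's own proof: apply the triangle inequality through $\cup_{t}\mathcal{Z}_t$, then prove $H(\cup_{k\le t}\mathcal{Z}_k,\mathcal{M}_t)\le \delta_t$ by induction using the triangle inequality together with the union (subadditivity) bound, where $\delta_t=\widehat{\epsilon}_t+\max(\epsilon_t,\delta_{t-1})$ is unrolled into the closed-form maximum. The only differences are cosmetic: you check the closed-form identity inline rather than as a separate proposition, and your base case uses $S_1=\mathcal{M}_1$ directly to get $\epsilon_1\le\epsilon_1+\widehat{\epsilon}_1$, whereas the paper passes through the triangle inequality.
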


The theorem establishes that the greedy-continued coreset $\mathcal{M}_T$ incurs only bounded error relative to the oracle coreset $\mathcal{O}$. The error accumulates per-task but does not compound catastrophically, supporting our observation that continued greedy sampling preserves representativeness under memory constraints. With sufficient per-task sampling size $n_t$, the greedy-continued coreset maintains quality. We prove Theorem~\ref{theorem} in Sec.~\ref{sec:theory} in the supplement.

\subsection{\ours as Meta Learning}
\label{sec:meta_learning}
Given a set $Z$, \textit{greedy sampling} samples instances by
\begin{equation}
z_{(n+1)}^* = \underset{z \in Z}{\arg \max} \, d(z, Z_{n}^*)
\end{equation}
where $d$ is the distance between a vector $z$ and the set of vectors $Z_n^* = \{ z^*_{(k)} \}_{k=1}^n$ constructed recursively. As the set distance between a vector $z$ and the set $\mathcal{Z}_{n}^*$, $d$ is defined as the minimum Euclidean distance to the vectors in the set.
The initial point $z_1$ is chosen randomly or as one whose distance summation to all points is maximal.

Given a set $Z$ and an arbitrary base set $B$, \textit{greedy-continued coreset sampling} samples instances by
\begin{align*}
& z_{(n+1)}^* = \underset{z \in \widehat{Z}_m}{\arg \max} \;  d(z, Z_{n}^*) \\
\text{subject to } & \widehat{Z}_m = \{ \widehat{z}_{(j)} \}_{j=1}^m
\end{align*}
where $\widehat{Z}_m {=} \{ \widehat{z}_{(j)} \}_{j=1}^m$ is constructed recursively by
\begin{equation}
\label{eq:meta}
\widehat{z}_{(j+1)} = \underset{z \in Z}{\arg \max} \;  d(z, \widehat{Z}_{j} \cup B).
\end{equation}
with an arbitrary base set $B$. This constrained optimization criterion clearly indicates the meta-learning nature of \ours as it applies greedy sampling on samples that are sampled by the greedy sampling.

\section{Experiments}
\label{sec:exp}
We conduct experiments to show the robustness of \ours. First, we explain the setup of our experiments: dataset descriptions, continual task schedules, evaluation metrics, and hardware resources specs. The following section compares \ours with the SOTA baselines using 11 different task schedulings. Finally, we conduct extensive ablation studies by analyzing the components of \ours, present empirical values which support our theoretical bound, compare with PatchCore under the oracle setup, test \ours on the online setup comparing with the SOTA baselines, and present the results of \ours on Real-IAD~\cite{realiad} in the supplement.

\begin{table*}[t]
\centering
\resizebox{1\linewidth}{!}{
\begin{tabular}{r@{\hspace{2mm}}c@{\hspace{1mm}}c@{\hspace{1mm}}c@{\hspace{1mm}}c@{\hspace{1mm}}c@{\hspace{1mm}}c@{\hspace{1mm}}c@{\hspace{1mm}}c@{\hspace{1mm}}c@{\hspace{1mm}}c}
\toprule
\multicolumn{1}{c}{\multirow{2.5}{*}{Method}} 
& \multicolumn{2}{c}{\textbf{1 × 15 with 15 Steps}} 
& \multicolumn{2}{c}{\textbf{3 × 5 with 5 Steps}} 
& \multicolumn{2}{c}{\textbf{10 -- 1 × 5 with 6 Steps}} 
& \multicolumn{2}{c}{\textbf{10 -- 5 with 2 Steps}} 
& \multicolumn{2}{c}{\textbf{14 -- 1 with 2 Steps}} 
\\ 
\cmidrule(r){2-3} \cmidrule(r){4-5} \cmidrule(r){6-7} \cmidrule(r){8-9} \cmidrule(r){10-11}
& \textbf{AUROC ($\uparrow$)} & \textbf{FM ($\downarrow$)} & \textbf{AUROC ($\uparrow$)} & \textbf{FM ($\downarrow$)} & \textbf{AUROC ($\uparrow$)} & \textbf{FM ($\downarrow$)} & \textbf{AUROC ($\uparrow$)}& \textbf{FM ($\downarrow$)} & \textbf{AUROC ($\uparrow$)}& \textbf{FM ($\downarrow$)}\\
\midrule
UniAD$\;_\text{NeurIPS'22}$ \cite{uniad} 
& 77.4 / - & 22.9 / - 
& 81.3 / 88.7 & 7.4 / 10.6  
& 76.6 / 82.3 & 21.1 / 17.3 
& 86.7 / 91.5 & 14.9 / 10.6 
& 85.7 / 89.6 & 18.3 / 13.3 
\\
UniAD  + EWC$\;_\text{PNAS'17}$ \cite{ewc}  
& - / - & - / - 
& 79.6 / 89.0 & 9.5 / 10.1  
& 89.6 / 93.8 & 5.4 / 3.6 
& 90.5 / 93.6 & 7.3 / 4.2 
& 92.8 / 95.4  & 4.1 / 1.9
\\
UniAD  + SI$\;_\text{PMLR'17}$ \cite{si} 
& - / - & - / - 
& 81.9 / 88.5 & 7.0 / 10.8  
& 77.2 / 81.6 & 20.2 / 18.2  
& 84.1 / 88.3 & 20.2 / 17.0 
& 85.7 / 89.5  & 18.4 / 13.4 
\\
UniAD  + MAS$\;_\text{ECCV'18}$ \cite{mas} 
& - / - & - / - 
& 81.5 / 89.0 & 7.2 / 10.2  
& 77.9 / 82.0& 19.5 / 17.7 
& 86.8 / 91.0 & 14.9 / 11.6 
& 85.8 / 89.6  & 18.1 / 13.3 
\\
UniAD  + LVT$\;_\text{CVPR'22}$ \cite{lvt} 
& - / - & - / - 
& 80.4 / 88.6 & 8.6 / 10.6  
& 78.2 / 88.3 & 19.1 / 16.1 
& 87.1 / 90.6 & 14.1 / 12.3 
& 80.4 / 86.0  & 29.1 / 20.6 
\\
\midrule 
DNE$\;_\text{ACMMM'22}$ \cite{dne}  
& 87.3 / - & 5.3 / - 
& 84.3 / - & 1.75 / - 
& 79.2 / - & \textbf{0.4} / - 
& 82.8 / - & \ul{2.1} / - 
& 79.2 / - & {0.1} / - 
\\ 

UCAD$\;_\text{AAAI'24}$ \cite{ucad}
& \ul{93.0} / 83.3 & 1.0 / \textbf{-0.6} 
& 84.8 / 90.1 & 10.3 / 9.2 
& {91.2} / {94.0} & {6.3} / {3.0} 
& 88.7 / 93.1 & 5.2 / 3.7 
& {93.8} / 95.7  & 1.8 / 0.3
\\ 

IUF$\;_\text{ECCV'24}$ \cite{iuf}
& 71.8 / 79.2 & 5.1 / 6.6 
& {84.2} / {91.1} & {10.0} / 8.4  
& {94.2} / {95.1}& {3.2} / {1.0} 
& 92.2 / 94.4 & 9.3 / 4.3
& 96.0 / 96.3  & 1.0 / 0.6 
\\ 

CDAD$\;_\text{CVPR'25}$ \cite{cdad} 
& 76.0 / \ul{89.8} & 10.2 / 5.7 
& \ul{89.1} / \ul{92.2} & \ul{3.7} / \ul{4.7} 
& \ul{94.9} / \ul{95.7}& {1.0} / \ul{0.7} 
& \ul{94.2} / \ul{95.3} & \ul{2.1} / \ul{2.4} 
& \ul{96.4} / \ul{96.6}  & \textbf{-0.6} / \textbf{-0.04}  
\\ 

\midrule
\rowcolor{red!25}
\textbf{\ourso} 
& \textbf{98.8} / \textbf{97.4} & \textbf{0.1} / \ul{0.2}  
& \textbf{98.2} / \textbf{96.9} & \textbf{0.6} / \textbf{0.5} 
& \textbf{98.9} / \textbf{96.4}& \ul{0.5} / \textbf{0.5} 
& \textbf{97.7} / \textbf{96.6} & \textbf{0.5} / \textbf{0.5} 
& \textbf{98.3} / \textbf{97.3}  & \ul{0.0} / \ul{0.1}  
\\ 
\bottomrule
\end{tabular}
}
\caption{Task average AUROC ($\uparrow$) and forgetting measurement (FM) ($\downarrow$) on MVTecAD dataset with 5 different task schedules. Each cell shows the image-level/pixel-level AUROC. The best and second best results are marked in \textbf{bold} and \ul{underlined}, respectively.}
\label{table:mvtec_auroc}
\end{table*}

\begin{table*}[t]
\centering
\resizebox{1\linewidth}{!}{
\begin{tabular}{rcccccccc}
\toprule
\multicolumn{1}{c}{\multirow{2.5}{*}{Method}} 
& \multicolumn{2}{c}{\textbf{1 × 12 with 12 Steps}}  
& \multicolumn{2}{c}{\textbf{8 -- 1 × 4 with 5 Stepss}} 
& \multicolumn{2}{c}{\textbf{8 -- 4 with 2 Steps}} 
& \multicolumn{2}{c}{\textbf{11 -- 1 with 2 Steps}} 
\\ 
\cmidrule(r){2-3} \cmidrule(r){4-5} \cmidrule(r){6-7} \cmidrule(r){8-9} 
& \textbf{AUROC ($\uparrow$)} & \textbf{FM ($\downarrow$)} & \textbf{AUROC ($\uparrow$)} & \textbf{FM ($\downarrow$)} & \textbf{AUROC ($\uparrow$)} & \textbf{FM ($\downarrow$)} & \textbf{AUROC ($\uparrow$)}& \textbf{FM ($\downarrow$)}\\
\midrule

UniAD$\;_\text{NeurIPS'22}$ \cite{uniad}   & 63.9 / - & 29.7 / - & 72.2 / 90.8 & 16.6 / 9.2 & 78.1 / 94.0 & 14.7 / 8.4 & 75.0 / 792.1 & 22.4 / 11.4 \\ 
UniAD + EWC$\;_\text{PNAS'17}$ \cite{ewc}  & - / - & - / - & 72.3 / 92.3 & 16.5 / 7.3 & 80.5 / 95.4 & 10.0 / 5.3 & 78.7 / 95.4 & 14.9 / 4.8 \\ 
UniAD + SI$\;_\text{PMLR'17}$ \cite{si}  & - / - & - / - & 69.8 / 88.5 & 19.8 / 12.0 & 69.8 / 88.5 & 9.2 / 8.3 & 78.1 / 92.0 & 16.9 / 11.5 \\ 
UniAD + MAS$\;_\text{ECCV'18}$ \cite{mas}  & - / - & - / - & 72.1 / 90.6 & 16.7 / 9.4 & 72.1 / 90.6 & 14.1 / 8.4 & 75.4 / 91.8 & 21.5 / 11.9 \\ 
UniAD + LVT$\;_\text{CVPR'22}$ \cite{lvt}  & - / - & - / - & 70.8 / 94.4 & 18.3 / 8.4 & 70.8 / 91.4 & 13.4 / 8.1 & 77.5 / 92.3 & 17.3 / 10.9 \\ 

\midrule

DNE (ViT-B/16)$\;_\text{ACMMM'22}$ \cite{dne} 
& 55.2 / - & 11.4 / - 
& 58.6 / - & 10.2 / - 
& 64.1 / - & 6.1 / - 
& 68.8 / - & 1.9 / - 
\\ 
UCAD (ViT+SAM)$\;_\text{AAAI'24}$ \cite{ucad}  
& 78.1 / 76.9 & \textbf{0.0} / \textbf{0.6}   
& 78.8 / 93.5 & {10.4} / {5.4} 
& 79.9 / 94.2  & 8.9 / 4.8 
& 85.9 / 94.5 & 2.7 / \ul{0.6}
\\
IUF (EfficientNet)$\;_\text{ECCV'24}$ \cite{iuf}   
& 64.7 / 83.6 & 3.9 / \ul{5.0}   
& {79.8} / {95.0} & {9.8} / 6.8 
& {80.1} / {95.4}  & 9.8 / 6.8 
& {87.3} / \textbf{97.6}  & {2.4} / 1.8
\\
CDAD (SDv1.5)$\;_\text{CVPR'25}$ \cite{cdad}
& \ul{78.5} / \textbf{97.3} & 11.1 / 6.0 
& \ul{83.4} / \ul{95.8}  & \ul{3.8} / \ul{1.5} 
& \ul{85.3} / \textbf{97.1}   & \ul{1.4} / \textbf{0.3}   
& \ul{88.3} / {97.2} & \textbf{-1.3} / \textbf{-0.1}  
\\ 
\midrule
\rowcolor{red!25}
\textbf{\ourso (WRN50)} 
& \textbf{93.7} / \ul{97.2}& \ul{0.7} / \textbf{0.6} 
& \textbf{96.3} / \textbf{97.2} & \textbf{0.7} / \textbf{0.5} 
& \textbf{93.8} / \ul{96.5} & \textbf{1.2} / {0.8} 
& \textbf{96.1} / \ul{97.7}  & \ul{0.4} / \textbf{-0.0} 
\\ 
\bottomrule
\end{tabular}
}
\caption{Task average AUROC ($\uparrow$) and forgetting measurement (FM) ($\downarrow$) on VisA dataset with 4 different task schedules. Each cell shows the image-level/pixel-level AUROC. The best and second best results are marked in \textbf{bold} and \ul{underlined}, respectively.}
\label{table:visa_auroc}
\end{table*}

\subsection{Experimental setup} 
\noindent\textbf{Datasets and task scheduling} We conduct most of the experiments with the two widely used AD datasets: MVTecAD \cite{mvtec} and VisA \cite{visa}. For the continual task setup, we follow schedules introduced in IUF and UCAD. We denote a sequence of tasks with -- and repeated task with $\times$ where each number represent the number of classes for each task. For example, 10 -- 1 $\times$ 5 starts with a task of 10 classes followed by 5 tasks with 1 class each. IUF schedules MVTecAD with 14 -- 1 (2 steps), 10 -- 5 (2 steps), 3$\times$5 (5 steps), and 10 -- 1$\times$5 (6 steps) and VisA with 11 -- 1 (2 steps), 8 -- 4 (2 steps), 8 -- 1$\times$4 (5 steps) continual tasks, and UCAD arranges MVTecAD and VisA into 1$\times$15 (15 steps) and 1$\times$12 (12 steps). Moreover, CDAD proposes cross-dataset setup which trains the whole MVTecAD dataset, followed by the whole VisA and vice versa. The order of tasks are sorted by class names in the ascending alphabetical order.

These schedulings are designed to test the model's adaptability in various ways. 1$\times$15 and 1$\times$12 tests single-class incremental adaptability and verifies the resilience against catastrophic forgetting over a large number of tasks. 14 -- 1 and 11 -- 1 tests adding a single new class to a heavily pre-trained, multi-class base model mimicking the scenario where a deployed system receives a minor update. 10 -- 5 and 8 -- 4 simulates adding a large batch of new classes to the base model, simulating a major facility upgrade or bulk category addition in a single time. Meanwhile, 10 -- 1 $\times$ 5 and 8 -- 1 $\times$ 4 evaluates whether the complex base model can continuously adapt to a steady, ongoing stream of individual new tasks over time. Finally, 3 -- 3 $\times$ 4 simulates multi-class incremental setup similar to the 1$\times$15 schedule.

\paragraph{Models, hyperparameters, and hardware setup} We follow the setup of PatchCore \cite{patchcore}, where patch features are extracted from layer 2 and 3 of the WideResNet50~\cite{wideresnet} backbone with sampling rate $p=0.01$, coreset size $m=20,000$ for MVTecAD, $m=40,000$ for VisA. We use a single RTX 3090 GPU for the experiments.

\paragraph{Metrics} We report the task average performance \cite{performance} after the final task $T$ is trained. Specifically, the task average performance $R_T$ is computed by
\begin{equation}
\label{eq:avg_acc}
R_T
=
\frac{1}{|T|} \sum_{t=1}^{T}{R_{T,t}},
\end{equation}
\noindent where $R_{s,t}$ indicates the performance of the task $t$ at step $s$, and $T$ denotes the total number of steps.

We also report the forgetting measurement (FM) \cite{fm} metric, which measures the amount of information the model has lost throughout learning additional tasks. FM is computed by averaging the difference between the best performance during the previous tasks and the last performance after all tasks are trained, as shown below:
\begin{equation}
\label{eq:fm}
\text{FM} 
=
\frac{1}{T-1} \sum_{t=1}^{T-1}{ \underset{s \in \{1, \; \dots, \; T-1\}}{\max} (R_{s, \; t}-R_{T, \; t})}.
\end{equation}
We report the task average performance $R_T$ and FM of image-level Area Under the Receiver Operator Curve (AUROC) and pixel-level AUROC in Tab.~\ref{table:mvtec_auroc} and ~\ref{table:visa_auroc}. In addition, we also report the image-level Average Precision (AP), pixel-level Area Under the Per-Region-Overlap (AUPRO) in Tab.~\ref{table:mvtec_ap}, ~\ref{table:mvtec_pro}, ~\ref{table:visa_ap}, and~\ref{table:visa_pro} in the supplement.

\begin{table}[t]
\centering
\resizebox{0.7\linewidth}{!}{
\begin{tabular}{rcccc}
\toprule
\multicolumn{1}{c}{\multirow{2.5}{*}{Method}}& \multicolumn{2}{c}{\textbf{MVTec$\rightarrow$VisA}} & \multicolumn{2}{c}{\textbf{VisA$\rightarrow$MVTec}}\\
\cmidrule(r){2-3} \cmidrule(r){4-5}
\multicolumn{1}{l}{\multirow{-2}{*}{}} & \textbf{AUROC ($\uparrow$)}    & \textbf{FM ($\downarrow$)}    & \textbf{AUROC ($\uparrow$)}   & \textbf{FM ($\downarrow$)}    \\ \midrule
IUF$\;_\text{ECCV'24}$ \cite{iuf} & 82.4 / 92.5  & {11.1} / {4.9}  & 74.3 / 89.7 & 16.7 / {5.5} \\
CDAD$\;_\text{CVPR'25}$ \cite{cdad} & \ul{90.0} / \ul{94.9}  & \ul{4.7} /  \textbf{1.8}  & \ul{84.2} /  \ul{94.1} & \ul{6.9} /  \ul{3.5}  \\
\midrule
\rowcolor{red!25}
\textbf{\ourso} & \textbf{94.3} / \textbf{96.6}  & \textbf{0.6} /  \ul{0.3}  & \textbf{94.0} /  \textbf{96.7} & \textbf{2.1} /  \textbf{0.7}  \\ 
\bottomrule
\end{tabular}}
\caption{AUROC and FM results on cross-dataset CAD.}
\label{table:cross_dataset}
\end{table}

\subsection{Comparison with the SOTA baselines}
We compare \ours with UniAD, UniAD applied with continual learning methods (EWC, SI, MAS, and LVT), and existing CAD methods (DNE, UCAD, IUF, and CDAD). As shown in Tab.~\ref{table:mvtec_auroc} and~\ref{table:visa_auroc}, UniAD and the continual learning variants significantly show catastrophic forgetting based on the high FM and low AUROC while the CAD baselines show improved performance with lower FM and higher AUROC. However, existing CAD methods exhibit inherent trade-offs across task schedule and task complexity. Memory-based CAD methods (DNE, UCAD) minimize long-range forgetting from long-range task schedules (MVTecAD 1$\times$15, VisA 1$\times$14) with the inherent memories, however, struggle with complex tasks (MVTecAD 10 -- 5, VisA 8 -- 4, and VisA 8 -- 1$\times$4). Regularization-based methods resolve complex tasks well (MVTecAD 10 -- 5, VisA 8 -- 4, and VisA 8 -- 1$\times$4) and struggle from long-range task schedule as they lack explicit mechanism to keep information of long-range tasks. In contrast, \ours maintain stable results across every task schedules showing robust performance for both image and pixel-level metrics as well as FM. In Tab.~\ref{table:cross_dataset}, \ours surpass the previous baseline method CDAD in the cross dataset setup with a remarkable 8.9\% margin in VisA$\rightarrow$MVTecAD image-level AUROC. Finally, Fig.~\ref{fig:efficiency} compares the efficiency in terms of computation, number of parameter, training and inference time where \ours show significant efficiency comparing with other methods.

\begin{table}[t]
\centering
\resizebox{0.7\linewidth}{!}{
\begin{tabular}{r@{\hspace{0.6em}}c@{\hspace{0.6em}}c@{\hspace{0.6em}}c@{\hspace{0.6em}}c@{\hspace{0.6em}}}
\toprule
\multicolumn{1}{c}{\multirow{2}{*}{Method}} & \textbf{\multirow{2}{*}{GFLOPS ($\downarrow$)}}  & \textbf{\multirow{2}{*}{Parameters ($\downarrow$)}}  & \textbf{\multirow{2}{*}{\shortstack{Test time\\ (ms) ($\downarrow$)}}} & \textbf{\multirow{2}{*}{\shortstack{Train time\\(s) ($\downarrow$)}}} \\
\\
\midrule
DNE$\;_\text{ACMMM'22}$ \cite{dne}  & 70.3 & 85.8 M & \ul{21.3} & 1233 \\
UCAD$\;_\text{AAAI'24}$ \cite{ucad}  & \ul{33.8} & 68.9 M & \textbf{15.3} & \ul{786} \\
IUF$\;_\text{ECCV'24}$ \cite{iuf}  & 185.5 & \textbf{28.5 M} & 80.8 & 39192 \\ 
CDAD$\;_\text{CVPR'25}$ \cite{cdad}  & 5628.0 & 11.0 G & 2694.0 & 281434 \\ 
\midrule
\rowcolor{red!25}
\textbf{\ourso} & \textbf{9.24} & \ul{45.3M} & 39.1 & \textbf{754} \\
\bottomrule
\end{tabular}
}
\caption{Comparing the efficiency of \ours and the SOTA baselines in terms of computation, model size, inference speed, and training time.}
\label{table:efficiency}
\end{table}

\begin{table}[t]
\centering
\resizebox{0.7\linewidth}{!}{
\begin{tabular}{rcccc}
\toprule
\multicolumn{1}{c}{\multirow{2.5}{*}{Task Schedule}} & \multicolumn{2}{c}{\textbf{Img / Pixel AUROC}} & \multicolumn{2}{c}{\textbf{Image AP / AUPRO}}\\ 
\cmidrule(r){2-3} \cmidrule(r){4-5} 
& \textbf{Performance ($\uparrow$)} & \textbf{FM ($\downarrow$)} & \textbf{Performance ($\uparrow$)} & \textbf{FM ($\downarrow$)} \\
\midrule
$1\times30$ & 90.2 / 97.5 & 1.7 / 0.6 & 85.1 / 96.6 & 1.7 / 0.6 \\ 
$5\times6$ & 90.2 / 97.7 & 1.4 / 0.3 & 85 / 96.8 & 1.6 / 0.3 \\ 
$10\times3$ & 90.3 / 97.6 & 0.9 / 0.1 & 85 / 96.7 & 1.1 / 0.1 \\ 
10--20 & 89.9 / 97.5 & -0.3 / -0.1 & 84.6 / 96.6 & -0.1 / -0.1 \\ 
20--10 & 90.4 / 97.3 & -0.2 / -0.3 & 85 / 96.4 & -0.4 / -0.4 \\ 
25--1$\times$5 & 89.6 / 96.9 & 0.4 / 0.1 & 83 / 95.8 & 0.3 / 0.1 \\ 
25--5 & 89.8 / 96.9 & -0.2 / -0.4 & 83.8 / 95.8 & -0.3 / -0.4 \\ 
\midrule
\multicolumn{1}{c}{\multirow{2.5}{*}{Task Schedule}} & \multicolumn{2}{c}{\textbf{Img / Pixel AUROC}} & \multicolumn{2}{c}{\textbf{Image AP / AUPRO}}\\ 
\cmidrule(r){2-3} \cmidrule(r){4-5} 
& \textbf{Performance ($\uparrow$)} & \textbf{FM ($\downarrow$)} & \textbf{Performance ($\uparrow$)} & \textbf{FM ($\downarrow$)} \\
\midrule
UniAD-Oracle & 83.0 / 97.3 & - & 80.9 / 86.7 & - \\ 
PatchCore-Oneclass  & 89.4 / - & - & - & - \\ 
\bottomrule
\end{tabular}
}
\caption{Performance of \ours on Real-IAD dataset.}
\label{table:realiad}
\end{table}

\subsection{Large scale datasets - Real-IAD} 
We evaluate \ours on Real-IAD \cite{realiad}, which is a large-scale dataset. We increase the coreset size $m=250,000$ and leave the hyperparameters same. We follow the official noise-free protocol of Real-IAD with 36,465 normal train images and 114,585 test images composed of 63,256 normal and 51,329 anomalous images with 30 classes in total. We test with various task schedules, as shown in Tab.~\ref{table:realiad} ranging from simple multiple task, complex few tasks, and mix of them. We compare with UniAD-oracle model, which is equivalent as UniAD trained with multi-class setup, and PatchCore-OneClass which are reported in the Real-IAD paper. Despite trained with continual task schedules, \ours outperforms both UniAD-oracle and PatchCore-OneClass models in all reported metrics. The training time took around 65,000 seconds for each model. 

\subsection{Ablation study}
We conduct extensive ablation study on the components of \ours. Specifically, we analyze how the coreset size $m$, sampling ratio $p$, and approximate ratio $q$ affect \ours. Additionally, we show how different backbones affect \ours. Experiments are done on MVTecAD where the result metrics are averaged through all task schedules used in Tab.~\ref{table:mvtec_auroc} and~\ref{table:visa_auroc}. We provide full results of the experiments in the appendix along with experiments on VisA.

\begin{figure}
\centering
\subfigure[\label{fig:coreset_size}]{%
    \includegraphics[width=0.245\linewidth]{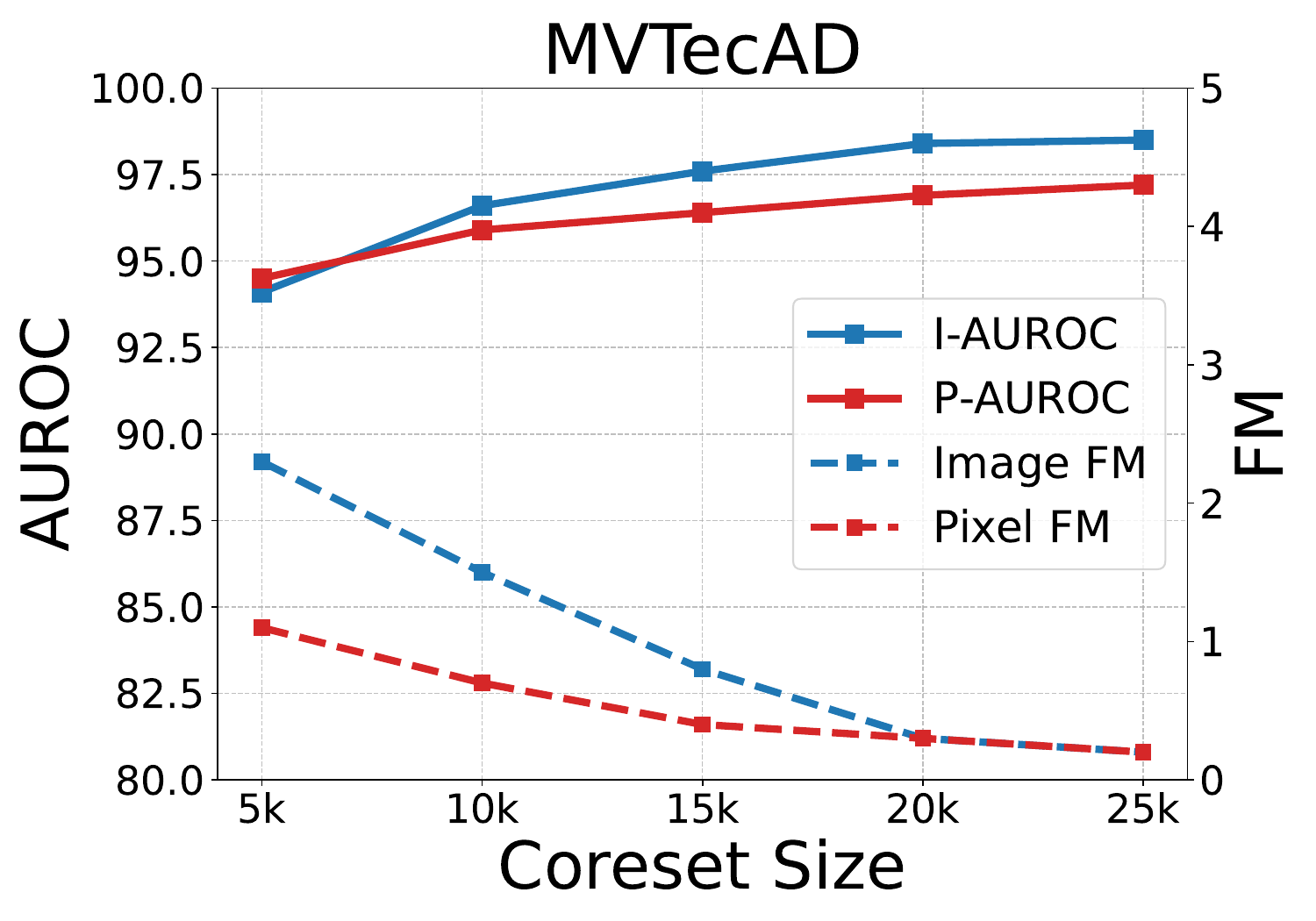}%
}
\subfigure[\label{fig:sampling_ratio}]{%
    \includegraphics[width=0.245\linewidth]{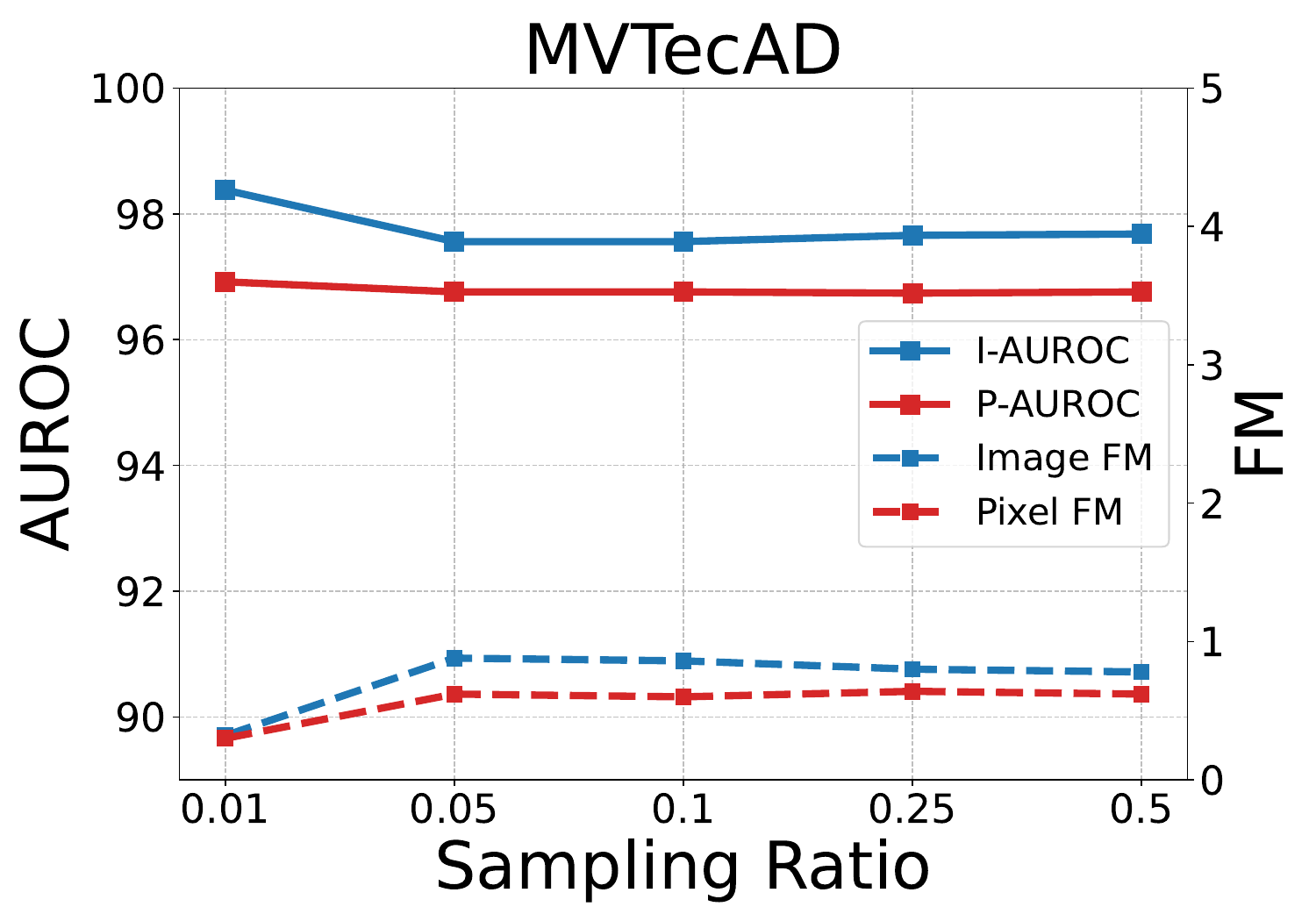}%
}
\subfigure[\label{fig:approximate_ratio}]{%
    \includegraphics[width=0.245\linewidth]{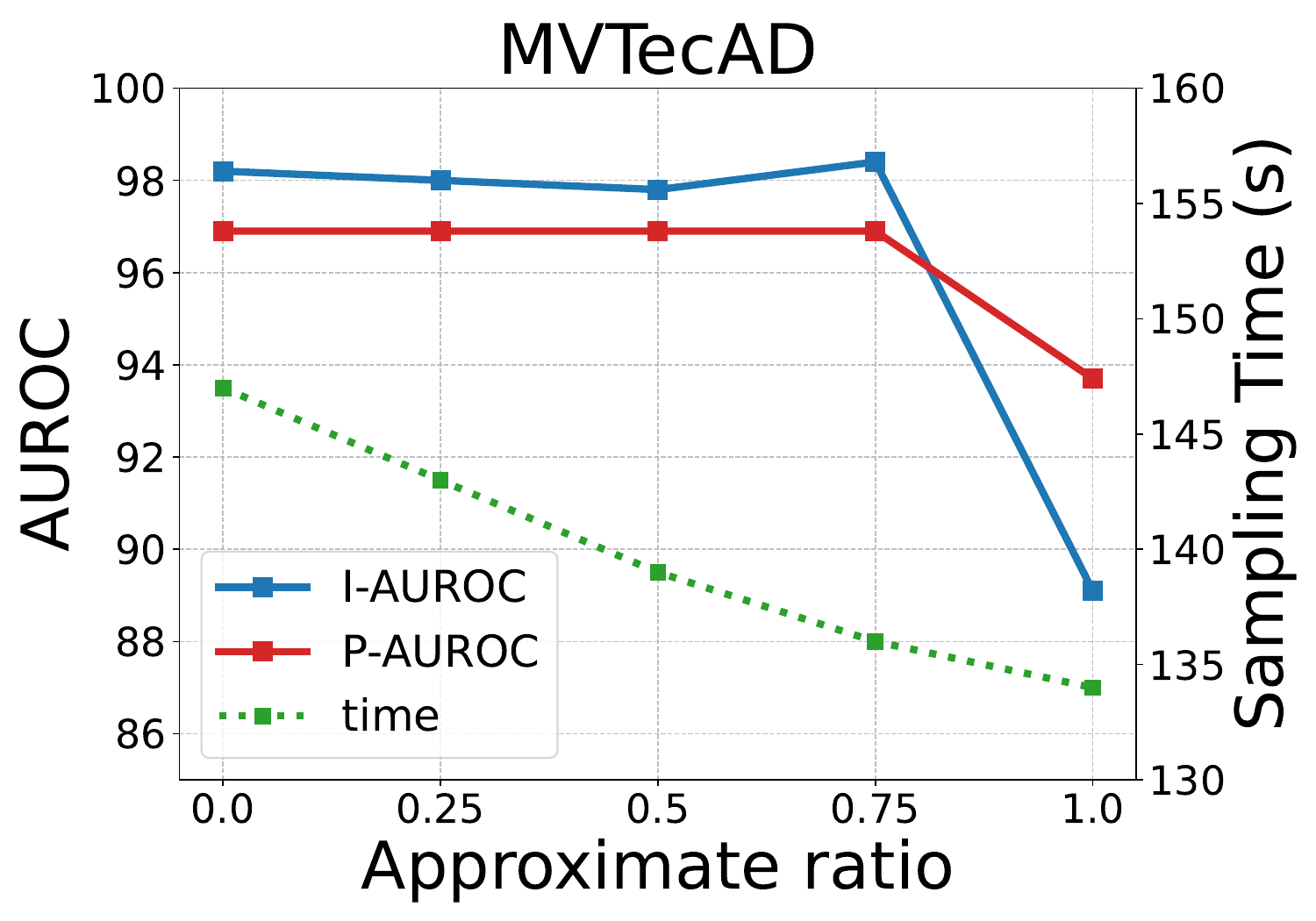}%
}
\subfigure[\label{fig:approximate_ratio_toy}]{%
    \includegraphics[width=0.245\linewidth]{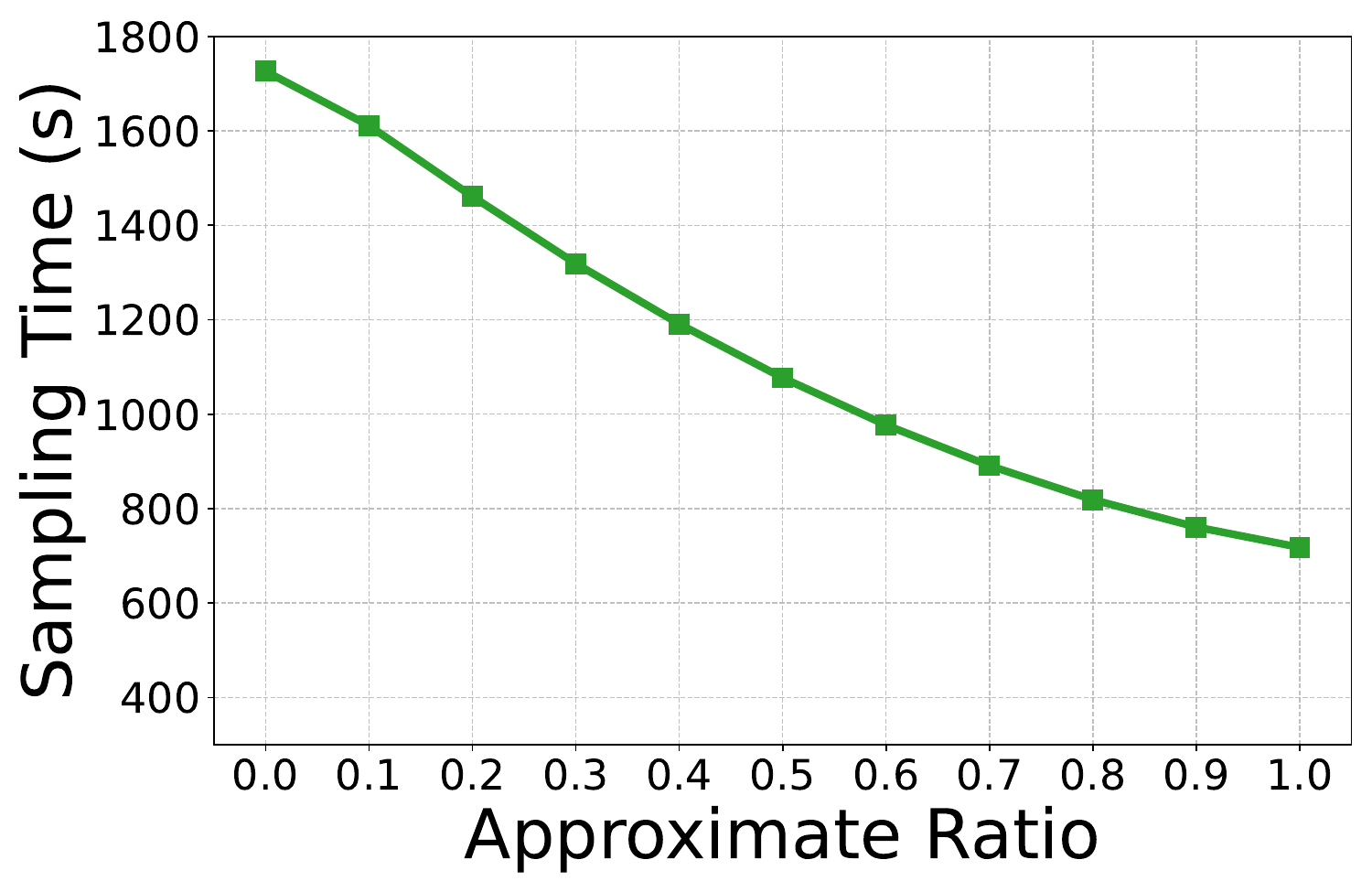}%
}
\caption{Ablation results of \ours on MVTecAD by varying coreset size $m$ (a), sampling ratio $p$ (b), approximation ratio $q$ (c), and effect of approximation ratio $q$ with large coreset size $m$ and sampling ratio $p$ (d).
}
\label{fig:ablation}
\end{figure}

\paragraph{Coreset size $m$} is the most important hyperparameters of \ours, which equivalently works as the capacity of a neural network for network training as it follows the performance-speed tradeoff. Smaller $m$ models are faster but less performative while larger $m$ performs better with the tradeoff of inference speed. Therefore, selecting the proper value for $m$ is crucial. Fig.~\ref{fig:coreset_size} shows how the size of the coreset impacts \ours where performance is proportional to coreset size. However, performance increasement starts to plateu when $m \geq 20000$ which is a good performance-efficiency trade-off point.

\paragraph{Sampling ratio $p$} is crucial where Theorem~\ref{theorem} explains by presenting the importance of sufficient per-task sampling size for effective greedy continuation. It also determines the number of features to sample during greedy expansion, which affects greedy consolidation. As shown in Fig~\ref{fig:sampling_ratio}, the overall performance of varying the sampling ratio from $0.01 \leq p \leq 0.5$ doesn't severely fluctuate. Despite that high $p$ values include more features during greedy expansion, greedy consolidation sorts out the most representative features. As low $p$ values require less time and, we select $p=0.01$. While our experiments were conducted on MVTecAD with sufficient number of samples, higher $p$ might be required for datasets with less samples.

\paragraph{Approximation ratio} $q$ reduces the overall sampling time of \ours by approximating greedy consolidation via nearest-neighbor. In Fig~\ref{fig:approximate_ratio}, we present the overall performance of varying the $q$. The overall performance remains stable in the interval $0 \leq q \leq 0.75$ but drops drastically at $q=1$ where relying on nearest-neighbor ranking alone destroys the coverage structure. The sampling time decreases as $q$ gets larger which is more significant in results on VisA in appendix. As the effect of $q$ is more prominent with larger coreset and higher sampling ratios(more features from greedy expansion) we conduct an additional experiment on MVTecAD with $p=0.2$ and $m=100000$, and vary only $q$ to measure its effect on sampling time. Please note that this experiment focuses on the difference in sampling time based on approximate ratio of \ours regardless of the performance. Fig~\ref{fig:approximate_ratio_toy} shows that the sampling time drops down to half with large $p$, $m$ values with $q=0.75$.

\begin{table}[t]
\centering
\resizebox{0.6\linewidth}{!}{
\begin{tabular}{lcccc}
\toprule
\multicolumn{1}{c}{\multirow{2.5}{*}{Backbone}} & \multicolumn{2}{c}{\textbf{MVTecAD}} & \multicolumn{2}{c}{\textbf{VisA}}   \\ 
\cmidrule(r){2-3} \cmidrule(r){4-5} 
& \textbf{AUROC ($\uparrow$)} & \textbf{FM ($\downarrow$)} & \textbf{AUROC ($\uparrow$)} & \textbf{FM ($\downarrow$)} \\
\midrule
DenseNet201 & 96.9 / 95.4 & 0.8 / 0.4 & 92.7 / 96.4 & 2.2 / 0.4 \\ 
DINOv2 & 97.9 / 93.1 & 0.0 / 0.0 & 93.7 / 95.1 & 0.0 / 0.0 \\ 
EfficientNet-b1 & 96.0 / 94.4 & 0.0 / 0.0 & 89.6 / 95.8 & 0.1 / 0.0 \\ 
WideResNet101 & 98.5 / 97.2 & 0.4 / 0.3 & 94.4 / 97.3 & 1.2 / 0.5 \\ 
\bottomrule
\end{tabular}
}
\caption{Ablation of \ours using different backbones.}
\label{table:abl_bbone}
\end{table}

\paragraph{Different backbones} We validate the effect of varying the backbone and its effect on \ours. Tab.~\ref{table:abl_bbone} presents that the performance is correlated with the representation quality of extracted features. \ours consistently performs well across various backbone architectures, showing that the effectiveness of \ours originates from the greedy-continued coreset construction and the principle holds across different feature extractors, while absolute performance scales with feature quality.

\subsection{Empirical tightness of theory}
We empirically verify the inequality of Eq.~\eqref{eq:inequality} from Sec.~\ref{sec:theorem} by considering the equality case where $\epsilon_o = H(\mathcal{O}, \; \cup_{t=1}^T \mathcal{Z}_t)$, $\epsilon_t = H(S_t, \; \mathcal{Z}_t)$, and $\widehat{\epsilon}_t = H(\mathcal{M}_t, \; S_t \cup \mathcal{M}_{t-1})$. For this experiment, we build a toy dataset by choosing the first 10 classes of MVTecAD and sample 12 images from each class. This setup is due to the computation of the term $H(\mathcal{O}, \; \cup_{t=1}^T \mathcal{Z}_t)$ requires the pair-wise distance between every feature from the dataset and the sampled coreset, which is excessive. Then, we conduct single-class incremental setup for tasks $T\in\{2,3,...,10\}$ with coreset size $m = 150\times T$ and measure the value of each components in the inequality As shown in Tab.~\ref{table:theorem_bound}, the right hand side of Eq.~\eqref{eq:inequality} has higher values than $H(\mathcal{O}, \; \mathcal{M}_T)$ for every $T$.

\begin{table}[t]
\centering
\resizebox{0.6\linewidth}{!}{
\begin{tabular}{l@{\hspace{0.6em}}c@{\hspace{0.6em}}c@{\hspace{0.6em}}c@{\hspace{0.3em}}c}
\toprule
$T$ 
& $H(\mathcal{O}, \; \mathcal{M}_T)$ 
& $\epsilon_o$ 
& $\max_{1 \leq k \leq T} \left( \epsilon_k + \sum_{j=k}^T \widehat{\epsilon}_j \right)$ 
& Right hand side of Eq.~\eqref{eq:inequality} \\  
\midrule
2 & 4.87 & 2.28 & 4.37 & 6.65 \\  
3 & 4.72 & 2.22 & 5.99 & 8.21 \\  
4 & 4.59 & 2.13 & 7.54 & 9.67 \\  
5 & 4.50 & 2.04 & 9.02 & 11.06 \\  
6 & 4.44 & 2.08 & 10.66 & 12.74 \\  
7 & 4.35 & 2.08 & 12.27 & 14.35 \\  
8 & 4.35 & 2.21 & 13.95 & 16.16 \\  
9 & 4.31 & 2.07 & 15.22 & 17.29 \\  
10 & 4.36 & 2.19 & 16.85 & 19.04 \\  
\bottomrule
\end{tabular}
}
\caption{Theoretical tightness of bound shown for different number of tasks $\{2 \leq T \leq 10 \}$ on MVTecAD classes (tasks) with 12 images from each and the value of each of the terms.}
\label{table:theorem_bound}
\end{table}

\subsection{Online learning} 
We evaluate \ours under a more challenging setup, which we term ``online CAD." Unlike previous CAD works that allow multiple data epochs and unrestricted batch size, this protocol introduces stricter, more realistic constraints. On top of the continual task schedules, we limit each data sample to be processed exactly once and limit the batch size to 1. We increase the sampling ratio $p=0.1$ to acquire maximally informative patch features from single samples and leave the rest of hyperparameters the same. Tab.~\ref{table:abl_online_continual} shows that \ours outperforms the SOTA CAD methods under these demanding conditions with minor performance drop compared to the standard CAD protocol shown in Tab.~\ref{table:mvtec_auroc} and~\ref{table:visa_auroc}.

\begin{table}[t]
\centering
\resizebox{.6\linewidth}{!}{
\begin{tabular}{lcccc}
\toprule
\multicolumn{1}{c}{\multirow{2.5}{*}{Method}} & \multicolumn{2}{c}{\textbf{MVTecAD}} & \multicolumn{2}{c}{\textbf{VisA}}   \\ 
\cmidrule(r){2-3} \cmidrule(r){4-5} 
& \textbf{AUROC ($\uparrow$)} & \textbf{FM ($\downarrow$)} & \textbf{AUROC ($\uparrow$)} & \textbf{FM ($\downarrow$)} \\
\midrule
DNE$\;_\text{ACMMM'22}$ \cite{dne} & 69.1 / - & 1.3 / - & 65.0 / - & 3.1 / - \\ 
UCAD$\;_\text{AAAI'24}$ \cite{ucad} & \ul{84.2} / 76.2 & \textbf{-1.1} / \ul{0.3} & \ul{74.6} / 75.8 & \textbf{0.1} / \ul{0.4} \\ 
IUF$\;_\text{ECCV'24}$ \cite{iuf} & 71.0 / \ul{80.5} & \ul{0.4} / \textbf{0.1} & 60.3 / 85.0 & 0.9 / 1.7 \\ 
CDAD$\;_\text{CVPR'25}$ \cite{cdad} & 66.2 / 80.4 & 2.0 / \textbf{0.1} & 58.9 / \ul{87.5} & \ul{0.2} / \textbf{0.3} \\ 
\midrule
\rowcolor{red!25}
\textbf{\ourso} & \textbf{97.2 / 96.7} & 1.5 / 0.9 & \textbf{94.1 / 97.2} & 1.3 / 0.8 \\ 
\bottomrule
\end{tabular}
}
\caption{The results on online continual anomaly detection setup.}
\label{table:abl_online_continual}
\end{table}

\begin{table*}[t]
\centering
\resizebox{.6\linewidth}{!}{
\begin{tabular}{lcccc}
\toprule
\multicolumn{1}{c}{\multirow{2.5}{*}{Method}} & \multicolumn{2}{c}{\textbf{MVTecAD}} & \multicolumn{2}{c}{\textbf{VisA}}   \\ 
\cmidrule(r){2-3} \cmidrule(r){4-5} 
& \textbf{AUROC ($\uparrow$)} & \textbf{FM ($\downarrow$)} & \textbf{AUROC ($\uparrow$)} & \textbf{FM ($\downarrow$)} \\
\midrule
DNE$\;_\text{ACMMM'22}$ \cite{dne} & 82.6 / - & 1.9 / - & 61.7 / - & 7.4 / - \\ 
UCAD$\;_\text{AAAI'24}$ \cite{ucad} & 90.3 / 91.2 & 4.9 / 3.1 & 80.7 / 89.8 & 5.5 / 2.9 \\ 
IUF$\;_\text{ECCV'24}$ \cite{iuf} & 87.7 / 91.2 & 5.7 / 4.2 & 78.0 / 92.9 & 6.5 / 5.1 \\ 
CDAD$\;_\text{CVPR'25}$ \cite{cdad} & 90.1 / 93.9 & 3.3 / 2.7 & 83.9 / 96.9 & 3.8 / 1.9 \\ 
\midrule
PatchCore Oracle & \textbf{98.4} / \textbf{97.2} & \textbf{0.0} / \textbf{0.0} & \textbf{95.1} / \textbf{97.6} & \textbf{-0.1} / \ul{0.0} \\
PatchCore Oracle Online & \ul{91.2} / 90.1 & \textbf{0.0} / \textbf{0.0} & 77.9 / 85.2 & \ul{0.0} / \textbf{-0.3} \\ 
\midrule
\rowcolor{red!25}
\textbf{\ourso} & \textbf{98.4} / \ul{96.9} & \ul{0.3} / \ul{0.3} & \ul{95.0} / \ul{97.2} & 0.8 / 0.5 \\ 
\bottomrule
\end{tabular}
}
\caption{Comparison with PatchCore-Oracle.}
\label{table:abl_patchcore_oracle}
\end{table*}

\subsection{\ours and PatchCore} 
PatchCore involves a single offline greedy sampling from the whole feature set which is not inherently designed for continual learning and naively accumulating new features leads to memory overflow. To establish a comparable coreset based baseline, we propose PatchCore-Oracle where the total number of tasks is given as oracle information to uniformly divide the memory budget across all tasks. As shown in Tab.~\ref{table:abl_patchcore_oracle}, \ours performs comparably in a standard continual setting even against this advantaged baseline. However, in the online CAD setup where batch size is 1, PatchCore-Oracle's performance degrades due to the nondiversified feature set to sample at each step which shows the limitation: the lack of the mechanism refering to the current coreset for selecting new features from incoming tasks while \ours, is explicitly designed for such dynamic updates. 

Furthermore, our goal is to construct a coreset under a continual setup that approximates that of PatchCore which is sampled from all features. Therefore, we analyze the sampled coreset of \ours, PatchCore, and a random sampling. Tab.~\ref{table:coreset_wasserstein} shows the results where \ours has more overlapping samples, lower avearge minimum distance for each sample, lower sliced Wasserstein distance, and lower Hausdorff distance than random sampling, which quantitatively confirms that the coreset of \ours is more similar to that of PatchCore-Oracle.

\begin{table}[t]
\centering
\resizebox{0.65\linewidth}{!}{
\begin{tabular}{r@{\hspace{0.6em}}c@{\hspace{0.6em}}c@{\hspace{0.6em}}c@{\hspace{0.3em}}c}
\toprule
\multirow{2}{*}{\shortstack{Sampling\\Method}} & \textbf{\multirow{2}{*}{\shortstack{Overlapping\\Samples ($\uparrow$)}}} & \textbf{\multirow{2}{*}{\shortstack{Average Minimum\\Distance ($\downarrow$)}}} & \textbf{\multirow{2}{*}{\shortstack{Sliced Wasserstein\\Distance ($\downarrow$)}}} & \textbf{\multirow{2}{*}{\shortstack{Hausdorff\\Distance ($\downarrow$)}}}  \\
\\
\midrule
Random &  356 & 1.17 & 0.0260 & 2.55 \\
\rowcolor{red!25}
\textbf{MGS} & 3933 & 1.04 & 0.0014 & 1.93 \\
\bottomrule
\end{tabular}
}
\caption{Comparing of coresets of \ours and random sampling with respect to PatchCore model.}
\label{table:coreset_wasserstein}
\end{table}

\subsection{Qualitative analysis} 
We compare the segmentation qualities of CAD methods on MVTecAD with different task schedules. To visualize forgetting in the extreme case, we show the samples from the oldest task. As shown in Fig.~\ref{fig:samples}, UCAD fails with complex tasks such as (14--1, 10--5) and IUF and CDAD fail for long-ranged tasks ($1\times14$) while \ours shows stable fine-grained localization quality. We provide more qualitative results of each task schedule in the appendix.

\begin{figure}[t]
\centering
\includegraphics[width=.45\linewidth]{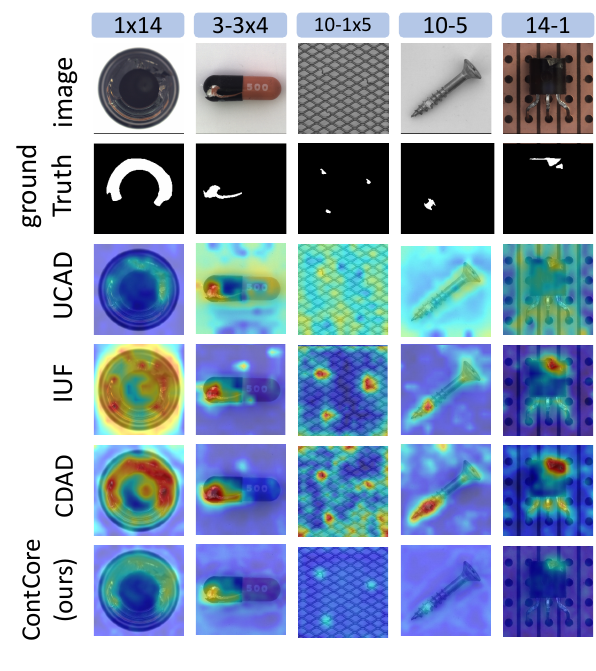}
\caption{
Anomaly segmentation qualitative comparison of \ours and the SOTA methods by different task schedules on MVTecAD. All samples are from the first task.
}
\label{fig:samples}
\vspace{-.5em}
\end{figure}

\section{Conclusion}
\label{sec:conclusion}
Greedy sampling selects maximally separated points, producing a compact yet representative summary of normal data. Since anomaly detection relies on measuring distance from normality, a representative coreset is essential for reliable detection. The challenge for continual AD is maintaining representativeness as tasks accumulate under memory constraints. We observed that continued greedy sampling effectively preserves representativeness under strict memory limits, and provided theoretical justification showing the greedy-continued coreset approximates the oracle coreset within a bounded gap. We instantiated this principle in \ours, which constructs a greedy-continued coreset through greedy expansion and consolidation. Unlike neural methods susceptible to catastrophic forgetting or naive coreset methods requiring unbounded memory, \ours maintains fixed memory with theoretical guarantees. \ours achieves state-of-the-art results across 11 continual task schedules on MVTecAD and VisA while maintaining efficient training and inference. The approach extends to online continual AD settings where prior methods fail significantly, demonstrating robustness beyond specific algorithmic choices.

\paragraph{Acknowledgments}
Yoon Gyo Jung and Octavia Camps were supported by ONR grant N00014-21-1-2431. Kuan-Chuan Peng was exclusively supported by Mitsubishi Electric Research Laboratories. The views and conclusions contained in this document are those of the authors and should not be interpreted as necessarily representing the official policies, either expressed or implied, of the ONR.


\bibliography{refs}

\clearpage
\renewcommand{\thepage}{S\arabic{page}}
\setcounter{page}{1}

\setcounter{theorem}{0}

\section{Theoretical Analysis}
\label{sec:theory}

\begin{theorem}
\label{thm:meta_greedy_error}
Suppose
\begin{equation}
H(\cO, \cup_{t=1}^T \cZ_t) \leq \epsilon_o,
\end{equation}
and 
\begin{equation}
H(S_t, \cZ_t) \leq \epsilon_t,
\quad
H(\cM_t, S_t \cup \cM_{t-1}) \leq \widehat{\epsilon}_t
\end{equation}
for all $t=1,\dots,T$ with $S_1 = \cM_1$, $\cM_0 = \emptyset$, and $\widehat{\epsilon}_0 = 0$,
where $\cO = \cG_{n_0}(\cup_{t=1}^T \cZ_t)$, $S_t = \cG_{n_t}(\cZ_t; \cM_{t-1})$, and $\cM_t = \cG_{m_t}(S_t \cup \cM_{t-1})$.
Then, we have
\begin{equation}
H(\cO, \cM_T)
\leq
\epsilon_o +
\max_{1 \leq k \leq T} \left( \epsilon_k + \sum_{j=k}^T \widehat{\epsilon}_j \right)
\end{equation}
for sufficiently large $n_o \leq | \cup_{t=1}^T \cZ_t |$, $n_t \leq |\cZ_t|$, and $m_t \leq |S_t \cup \cM_{t-1}|$
\end{theorem}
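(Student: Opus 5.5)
The plan is to use the triangle inequality for the Hausdorff distance $H$ on finite subsets of Euclidean space, and to pass through the full data set $\cup_{t=1}^T \cZ_t$:
\begin{equation*}
H(\cO, \cM_T) \leq H(\cO, \cup_{t=1}^T \cZ_t) + H(\cup_{t=1}^T \cZ_t, \cM_T) \leq \epsilon_o + H(\cup_{t=1}^T \cZ_t, \cM_T).
\end{equation*}
What remains is to show
\begin{equation*}
H\bigl(\cup_{t=1}^T \cZ_t, \cM_T\bigr) \leq \max_{1 \leq k \leq T}\Bigl(\epsilon_k + \sum_{j=k}^T \widehat{\epsilon}_j\Bigr).
\end{equation*}
We use two elementary facts about $H$ throughout: $H(A\cup B, C\cup D) \leq \max(H(A,C), H(B,D))$, and the triangle inequality. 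The ``sufficiently large'' sampling sizes enter only to guarantee that the hypotheses $\epsilon_t,\widehat{\epsilon}_t$ are finite and meaningful. Note also that every greedy output is a subset of its input, so $\cM_T \subseteq \cup_t \cZ_t$.

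The core is an induction on $T$ for the statement
\begin{equation*}
H\bigl(\cup_{t=1}^T \cZ_t, \cM_T\bigr) \leq \max_{1\le k\le T}\Bigl(\epsilon_k + \sum_{j=k}^T \widehat{\epsilon}_j\Bigr) =: E_T.
\end{equation*}
For the base case, $\cM_1 = S_1$, so $H(\cZ_1,\cM_1) \leq \epsilon_1 \leq \epsilon_1+\widehat{\epsilon}_1 = E_1$. For the inductive step, apply the triangle inequality through $S_T \cup \cM_{T-1}$:
\begin{equation*}
H\bigl(\cup_{t\le T}\cZ_t, \cM_T\bigr) \leq H\bigl(\cup_{t\le T}\cZ_t, S_T\cup\cM_{T-1}\bigr) + \widehat{\epsilon}_T.
\end{equation*}
Then split $\cup_{t\le T}\cZ_t = \cZ_T \cup \bigl(\cup_{t\le T-1}\cZ_t\bigr)$ and use the union property to get
\begin{equation*}
H\bigl(\cup_{t\le T}\cZ_t, S_T\cup\cM_{T-1}\bigr) \leq \max\bigl(\epsilon_T, E_{T-1}\bigr).
\end{equation*}
This gives $\max(\epsilon_T+\widehat{\epsilon}_T, E_{T-1}+\widehat{\epsilon}_T)$. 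Unfolding $E_{T-1}+\widehat{\epsilon}_T$ shows that it adds $\widehat{\epsilon}_T$ to each term of the max defining $E_{T-1}$. This is exactly $\max_{k\le T-1}(\epsilon_k+\sum_{j=k}^T\widehat{\epsilon}_j)$, so together with the $k=T$ term we recover $E_T$.

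The main obstacle is justifying the union inequality $H(A\cup B, C\cup D)\le \max(H(A,C),H(B,D))$ when the sets are nonempty. This is standard: each point of $A\cup B$ is close to a point of $C$ or of $D$, and conversely. The other delicate point is the edge conventions. We use $\cM_0=\emptyset$ and $\widehat{\epsilon}_0=0$, and for $t=1$ the base set is empty, so the formula $S_1 = \cM_1$ must be read with $\cM_1=\cG_{m_1}(S_1)=S_1$ when $m_1\ge |S_1|$. This is where the ``sufficiently large'' hypothesis is used. It keeps the base case from degenerating, since the Hausdorff distance to an empty set would otherwise be infinite. Once these are in place, the chain of inequalities above gives the stated bound.
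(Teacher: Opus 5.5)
Your proposal is correct and follows essentially the same route as the paper. You pass through $\cup_{t=1}^T \cZ_t$ with the triangle inequality, then induct using the triangle inequality through $S_t \cup \cM_{t-1}$ together with $H(A\cup B, C\cup D)\le \max(H(A,C),H(B,D))$; the only differences are cosmetic: you induct directly on the closed form $E_T$ instead of first proving the recursion $\delta_t = \widehat{\epsilon}_t + \max(\epsilon_t,\delta_{t-1})$ and unrolling it separately, and your pointwise justification of the union bound is cleaner than the paper's one-line derivation from subadditivity.
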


We prove the above main theorem step-by-step as follows:

\subsection{Definitions}
The greedy sampling operation on a set $\cZ$ of embeddings and on a base set $B$ is defined as follows:
\begin{equation}
\cG_n(\cZ; B) 
= \{
z_{i} \in \cZ: 
z_{i} 
= 
\underset{z \in \cZ \setminus \{z_1, \dots, z_{i-1} \}}{\arg \max}
d(z, \{z_1, \dots, z_{i-1} \} \cup B)
\}
\end{equation}
for $2 \leq i \leq n$. The initial point $z_1$ is given as  $z_1 = \arg \max_{z \in \cZ}d(z, B)$. We denote $\cG(\cZ) = \cG(\cZ; \emptyset)$. 

Define 
\begin{equation}
d(\cZ_1, \cZ_2) = \min_{z_1 \in \cZ_1} \min_{z_2 \in \cZ_2} d(z_1, z_2),
\end{equation}
Then  Hausdorff distance $H$ is defined by
\begin{equation}
H(S_1, S_2) = \max( \max_{z_1 \in S_1} d(z_1, S_2), \max_{z_2 \in S_2} d(z_2, S_1)).
\end{equation}

\subsection{Derivations}

We first state Triangle Inequality and Subadditivity of Hausdorff distance \cite{deza2009encyclopedia,burago2001course,gromov1999metric}, which we will freely use throughout the derivations:

\begin{proposition}[Triangle Inequality]
Let $A$, $B$, and $C$ be subsets of the same metric space with metric $d$. Then,
\begin{equation}
H(A, C) \leq H(A,B) + H(B,C).    
\end{equation}
\end{proposition}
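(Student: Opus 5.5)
The plan is to reduce the set-level inequality to the point-to-set triangle inequality and then take maxima. Throughout, $A,B,C$ are nonempty subsets; in our setting they are finite sets of embeddings, so every minimum defining $d(z,S)$ and every maximum defining $H$ is attained. For general sets one replaces $\min/\max$ by $\inf/\sup$ and carries an arbitrary slack $\delta>0$ through the argument.

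\textbf{Step 1 (point-to-set triangle inequality).} First I will show that for any point $a$, any $b\in B$ and any nonempty set $C$,
\begin{equation*}
d(a,C) \leq d(a,b) + d(b,C).
\end{equation*}
Choose $c\in C$ with $d(b,c)=d(b,C)$. Then $d(a,C)\leq d(a,c)\leq d(a,b)+d(b,c)$ by the triangle inequality of the underlying metric $d$, which gives the claim.

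\textbf{Step 2 (one direction of the Hausdorff distance).} Fix $a\in A$ and choose $b\in B$ with $d(a,b)=d(a,B)$. By the definition of $H$, we have $d(a,B)\leq H(A,B)$, and since $b\in B$ we also have $d(b,C)\leq H(B,C)$. Step~1 then gives
\begin{equation*}
d(a,C)\leq d(a,b)+d(b,C) \leq H(A,B)+H(B,C).
\end{equation*}
Taking the maximum over $a\in A$ yields $\max_{a\in A} d(a,C)\leq H(A,B)+H(B,C)$.

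\textbf{Step 3 (the reverse direction).} Fix $c\in C$ and pick $b\in B$ nearest to $c$. The same argument gives $d(c,A)\leq d(c,b)+d(b,A)\leq H(B,C)+H(A,B)$, using the symmetry of $H$ in its two arguments. Taking the maximum over $c\in C$ bounds the second term in the definition of $H(A,C)$ as well. Since both terms in that maximum are bounded by $H(A,B)+H(B,C)$, the proposition follows.

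The argument is routine. The only point needing care is the existence of the nearest points chosen in Steps~1--3. This is immediate for finite sets. For infinite sets I would instead pick $b$ with $d(a,b)\leq d(a,B)+\delta$, and similarly for $c$, and then let $\delta\to 0$.
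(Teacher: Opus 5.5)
Your proof is correct and complete. It is the standard argument: the point-to-set triangle inequality, followed by bounding each of the two one-sided terms of $H(A,C)$ separately. The paper gives no proof of its own here; it simply invokes the proposition as a known fact, citing \cite{deza2009encyclopedia,burago2001course,gromov1999metric}. Your handling of attainment (finite sets, with a $\delta$-slack for general sets) is also fine. Step~1 does not even need a nearest point, since $d(a,C)=\inf_{c\in C} d(a,c)\le d(a,b)+\inf_{c\in C} d(b,c)$ directly.
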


\begin{proposition}[Subadditivity]
\begin{equation}
H(A \cup B, C  )
\leq
\max
\left(
H(A, C), H(B, C)
\right)
\end{equation}
\end{proposition}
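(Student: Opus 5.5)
The plan is to unfold the definition of the Hausdorff distance given in the Definitions subsection and bound its two one-sided terms separately, using only elementary properties of $\max$ and $\min$. No metric structure beyond the definition of the point-to-set distance $d(z,\cdot)$ is needed. I will take $A$, $B$, $C$ to be nonempty and finite, as are all the embedding sets in the paper, so that every $\max$ and $\min$ is attained. With $H$ as defined above,
\begin{equation*}
H(A\cup B, C) = \max\Bigl( \max_{x \in A\cup B} d(x, C),\; \max_{c \in C} d(c, A\cup B) \Bigr),
\end{equation*}
so it suffices to show that each of the two inner terms is at most $\max(H(A,C),H(B,C))$.

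\textbf{First term.} A maximum over a union splits into the larger of the two maxima:
\begin{equation*}
\max_{x\in A\cup B} d(x,C) = \max\Bigl(\max_{x\in A} d(x,C),\; \max_{x\in B} d(x,C)\Bigr).
\end{equation*}
Each of these one-sided quantities is one of the two terms in the definition of $H(A,C)$ or $H(B,C)$. Hence each is bounded by that Hausdorff distance, and the first term is at most $\max(H(A,C),H(B,C))$.

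\textbf{Second term.} For each $c\in C$, the point-to-set distance is a minimum over the union, so
\begin{equation*}
d(c,A\cup B) = \min\bigl(d(c,A),\, d(c,B)\bigr) \le d(c,A).
\end{equation*}
Taking the maximum over $c\in C$ gives
\begin{equation*}
\max_{c\in C} d(c,A\cup B) \le \max_{c\in C} d(c,A) \le H(A,C) \le \max\bigl(H(A,C),H(B,C)\bigr).
\end{equation*}
In fact this term is bounded by $\min(H(A,C),H(B,C))$, which is stronger than needed. Combining the two bounds yields the proposition.

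I expect no real obstacle. The only point needing care is the second term: the union appears as the target set, and one must notice that enlarging a set can only decrease distances to it, i.e.\ $d(c,A\cup B)\le d(c,A)$. If infinite sets were allowed, the same argument goes through verbatim with $\sup$ and $\inf$ in place of $\max$ and $\min$.
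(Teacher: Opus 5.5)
Your proof is correct. The paper gives no proof of this proposition: it lists it alongside the triangle inequality as a standard property of the Hausdorff distance and defers to textbook references. You instead derive it directly from the paper's own definition of $H$, using two elementary facts. First, a maximum over $A\cup B$ is the larger of the maxima over $A$ and over $B$. Second, enlarging the target set can only shrink point-to-set distances, $d(c,A\cup B)\le d(c,A)$.

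The citation gives full generality (arbitrary subsets of a metric space), and your closing $\sup$/$\inf$ remark recovers that anyway. Your route gives a self-contained check. More usefully, it extends verbatim to the four-set form $H(A\cup B, C\cup D)\le\max(H(A,C),H(B,D))$, which is what the inductive step of the main theorem actually uses. For $x\in A$, $d(x,C\cup D)\le d(x,C)\le H(A,C)$; for $y\in D$, $d(y,A\cup B)\le d(y,B)\le H(B,D)$; points of $B$ and of $C$ are handled the same way.

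This extension is worth having. The paper derives the four-set form from the three-set form in one line, ``as $C\subseteq C\cup D$'', and that step does not go through as written. The reason is that $H$ is not monotone when one argument is enlarged: e.g.\ $H(\{0\},\{0,10\})=10>0=H(\{0\},\{0\})$. So $H(A,C\cup D)\le H(A,C)$ can fail, even though the four-set inequality itself is true.
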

Note that the subadditivity trivially implicates
\begin{equation}
H(A \cup B, C \cup D)
\leq 
\max
\left(
H(A, C), H(B, D)
\right)
\end{equation}
as $C \subseteq C \cup D$ and same for $D$.

\begin{theorem}
\label{thm:inductive_meta_greedy_bound}
Suppose $H(\cO, \cup_{t=1}^T \cZ_t) \leq \epsilon_o$, and for all $t=1,\dots,T$ we have $H(S_t, \cZ_t) \leq \epsilon_t$ with $S_1 = \cM_1$. Suppose $H(\cM_t, S_t \cup \cM_{t-1}) \leq \widehat{\epsilon}_t$ for $t=1, \dots, T$ with $\cM_0 = \emptyset$ and $\widehat{\epsilon}_0 = 0$. Then, we have
\begin{equation}
H(\cO, \cM_T) \leq \epsilon_o + \delta_T
\end{equation}
where 
\begin{equation}
\delta_t = 
\widehat{\epsilon}_t + \max(\epsilon_t, \delta_{t-1})
\end{equation}
for $t=1, \dots, T$ with $\delta_0 = 0$
\end{theorem}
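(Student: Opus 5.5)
The plan is to first reduce the claim to a bound on $H(\cM_T, \cup_{t=1}^T \cZ_t)$. By the Triangle Inequality proposition,
\begin{equation*}
H(\cO, \cM_T) \leq H(\cO, \cup_{t=1}^T \cZ_t) + H(\cup_{t=1}^T \cZ_t, \cM_T) \leq \epsilon_o + H(\cM_T, \cup_{t=1}^T \cZ_t),
\end{equation*}
using the symmetry of $H$. It therefore suffices to prove by induction on $t$ that
\begin{equation*}
H(\cM_t, \cup_{s=1}^t \cZ_s) \leq \delta_t, \qquad t = 0, 1, \dots, T,
\end{equation*}
with the recursion $\delta_t = \widehat{\epsilon}_t + \max(\epsilon_t, \delta_{t-1})$ and $\delta_0 = 0$. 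Taking $t = T$ then gives the theorem.

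For the base case $t=1$, we have $\cM_0 = \emptyset$ and $S_1 = \cM_1$. Hence $H(\cM_1, \cZ_1) = H(S_1, \cZ_1) \leq \epsilon_1$, and this is at most $\widehat{\epsilon}_1 + \max(\epsilon_1, 0) = \delta_1$. I will adopt the convention that the $t=0$ case is vacuous, or equivalently treat $\cup_{s\le 0}\cZ_s = \emptyset$. This avoids having to evaluate $H$ between empty sets.

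For the inductive step with $t \geq 2$, apply the triangle inequality through the intermediate set $S_t \cup \cM_{t-1}$:
\begin{equation*}
H(\cM_t, \cup_{s=1}^t \cZ_s) \leq H(\cM_t, S_t \cup \cM_{t-1}) + H(S_t \cup \cM_{t-1}, \cZ_t \cup \cup_{s=1}^{t-1} \cZ_s).
\end{equation*}
The first term is at most $\widehat{\epsilon}_t$ by hypothesis. For the second term, use the paired form of subadditivity recorded after the Subadditivity proposition, pairing $S_t$ with $\cZ_t$ and $\cM_{t-1}$ with $\cup_{s<t}\cZ_s$. This bounds the second term by
\begin{equation*}
\max\bigl(H(S_t, \cZ_t), H(\cM_{t-1}, \cup_{s<t}\cZ_s)\bigr) \leq \max(\epsilon_t, \delta_{t-1}),
\end{equation*}
using the hypothesis on $S_t$ and the induction hypothesis. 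Adding the two bounds gives $\delta_t$.

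The only delicate step is the paired subadditivity, $H(A\cup B, C\cup D) \le \max(H(A,C), H(B,D))$. The paper states it as a consequence of monotonicity, but Hausdorff distance is not monotone in either argument. So I will verify it directly from the definition. Every $a \in A$ lies within $H(A,C)$ of $C \subseteq C\cup D$, and likewise every $b\in B$ lies within $H(B,D)$ of $C \cup D$. Conversely, every $c \in C$ lies within $H(A,C)$ of $A$, and every $d \in D$ lies within $H(B,D)$ of $B$. Taking the maximum over both one-sided distances gives the claim.

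The "sufficiently large" size conditions play no role here, since they only guarantee that the hypotheses are attainable. Finally, unrolling the recursion gives $\delta_T \le \max_k(\epsilon_k + \sum_{j=k}^T \widehat\epsilon_j)$, which recovers the form of Theorem~\ref{thm:meta_greedy_error}.
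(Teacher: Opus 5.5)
Your proof is correct and follows the paper's argument essentially step for step. You reduce via the triangle inequality to bounding $H(\bigcup_{k\le t}\cZ_k,\cM_t)$, use $S_1=\cM_1$ for the base case, and in the inductive step pass through $S_t\cup\cM_{t-1}$ and apply paired subadditivity. Your direct pointwise verification of $H(A\cup B, C\cup D)\le\max(H(A,C),H(B,D))$ is a worthwhile addition, because the paper's justification (``as $C\subseteq C\cup D$'') relies on a monotonicity that the Hausdorff distance does not have.
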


\begin{proof}
We will prove by induction that
$$
H\!\biggl(\,\bigcup_{k=1}^t \cZ_k,\; \cM_t\biggr) \;\;\le\;\;\delta_t
  \quad
\text{for each}
$$
for each $t=1,\dots,T$ where 
$$
  \delta_t \;=\; \widehat{\epsilon}_t \;+\;\max\bigl(\epsilon_t,\;\delta_{t-1}\bigr),
  \qquad
  \delta_0 = 0.
$$
Once this is established for $t=T$, we can then combine it with
$$
  H\!\Bigl(\cO,\;\bigcup_{k=1}^T \cZ_k\Bigr) 
  \;\;\le\;\;\epsilon_o
$$
to conclude the desired by the triangle inequality.

\textbf{Base Case} $t=1$. Recall $S_1 = \cM_1$. Then
$$
  H(\cZ_1, \cM_1)
  \;\;\le\;\;
  H(\cZ_1, S_1) \;+\; H(S_1, \cM_1)
  \;\;=\;\;
  \epsilon_1 \;+\; \widehat{\epsilon}_1
  \;\;=\;\;
  \delta_1.
$$
So the claim holds for $t=1$.

\textbf{Inductive Step}. Suppose for some $t-1 \ge 1$ we have
$$
  H\!\Bigl(\bigcup_{k=1}^{t-1} \cZ_k,\; \cM_{t-1}\Bigr) \;\;\le\;\; \delta_{t-1}.
$$
By the triangle inequality and the assumption $H(\cM_{t-1} \cup S_t, \cM_t) \leq \widehat{\epsilon}_t$,
$$
  H\Bigl(\!\bigcup_{k=1}^t \cZ_k,\; \cM_t\Bigr)
  \;\;\le\;\;
  H\Bigl(\!\bigcup_{k=1}^{t-1} \cZ_k \cup \cZ_t,\; \cM_{t-1}\cup S_t\Bigr)
  \;+\;
  \widehat{\epsilon}_t,
$$
where
$$
  H\Bigl(\!\bigcup_{k=1}^{t-1} \cZ_k\cup \cZ_t,\; \cM_{t-1}\cup S_t\Bigr)
  \;\;\le\;\;
  \max\Bigl\{
    H\bigl(\!\bigcup_{k=1}^{t-1} \cZ_k,\; \cM_{t-1}\bigr),
    \;H(\cZ_t, S_t)
  \Bigr\}
\; \leq \; 
\max( \delta_{t-1}, \epsilon_t )
$$
Putting it all together:
$$
  H\Bigl(\!\bigcup_{k=1}^t \cZ_k,\; \cM_t\Bigr)
  \;\;\le\;\;
  \max(\delta_{t-1},\;\epsilon_t)
  \;+\;
  \widehat{\epsilon}_t
  \;=\;
  \delta_t.
$$
\end{proof}

\begin{proposition}
\label{prop:inductive_error}
Consider the inductive sequence 
\[
\delta_0 = 0
\quad\text{and}\quad
\delta_t \;=\; \widehat{\epsilon}_t \;+\;\max\bigl(\epsilon_t,\;\delta_{t-1}\bigr)
\quad\text{for}\quad t\ge1.
\]
A direct ``unrolling'' of this recursion reveals the closed‐form expression
\begin{equation}
\label{eq:delta_closed_form}
\delta_t 
\;=\;
\max_{1 \,\le\, k \,\le\, t}
\biggl(\,\epsilon_k \;+\; \sum_{j=k}^{t} \widehat{\epsilon}_j\biggr).
\end{equation}
\end{proposition}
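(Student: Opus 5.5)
The plan is to prove the closed form \eqref{eq:delta_closed_form} by induction on $t$, using only the recursion $\delta_t = \widehat{\epsilon}_t + \max(\epsilon_t, \delta_{t-1})$ with $\delta_0 = 0$. Write $D_t := \max_{1 \le k \le t}\bigl(\epsilon_k + \sum_{j=k}^{t}\widehat{\epsilon}_j\bigr)$; the goal is $\delta_t = D_t$ for all $t \ge 1$. Throughout I assume $\epsilon_k, \widehat{\epsilon}_k \ge 0$, which is automatic since they bound Hausdorff distances.

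\textbf{Base case.} For $t=1$ the recursion gives $\delta_1 = \widehat{\epsilon}_1 + \max(\epsilon_1, 0) = \epsilon_1 + \widehat{\epsilon}_1$. This uses $\epsilon_1 \ge 0$. On the other side, $D_1 = \epsilon_1 + \widehat{\epsilon}_1$, so the two agree.

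\textbf{Inductive step.} Assume $\delta_{t-1} = D_{t-1}$ for some $t \ge 2$. The key observation is that adding the common term $\widehat{\epsilon}_t$ commutes with taking a maximum. Hence
\begin{equation*}
\delta_t = \max\bigl(\epsilon_t + \widehat{\epsilon}_t,\; D_{t-1} + \widehat{\epsilon}_t\bigr).
\end{equation*}
Next, push $\widehat{\epsilon}_t$ inside the maximum defining $D_{t-1}$:
\begin{equation*}
D_{t-1} + \widehat{\epsilon}_t = \max_{1 \le k \le t-1}\Bigl(\epsilon_k + \sum_{j=k}^{t}\widehat{\epsilon}_j\Bigr).
\end{equation*}
This extends each partial sum from index $t-1$ to index $t$. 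The remaining term $\epsilon_t + \widehat{\epsilon}_t$ is exactly the $k=t$ term of $D_t$. Taking the maximum of the two pieces therefore gives $D_t$, which closes the induction.

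\textbf{Conclusion.} Substituting $t=T$ into Theorem~\ref{thm:inductive_meta_greedy_bound}, which gives $H(\cO,\cM_T) \le \epsilon_o + \delta_T$, yields the bound in Theorem~\ref{thm:meta_greedy_error}.

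The argument is essentially routine. The only point that needs care is the base case: there the convention $\delta_0 = 0$ must not dominate $\epsilon_1$, and nonnegativity of $\epsilon_1$ is what guarantees this. No other obstacle is expected.
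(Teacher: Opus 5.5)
Your proof is correct and follows the paper's argument essentially line for line. It has the same base case, the same rewriting $\delta_t = \max(\epsilon_t + \widehat{\epsilon}_t,\ \delta_{t-1} + \widehat{\epsilon}_t)$, and the same absorption of $\widehat{\epsilon}_t$ into each partial sum to give the $k \le t-1$ terms next to the new $k=t$ term; your explicit note that the base case needs $\epsilon_1 \ge 0$ (so that $\max(\epsilon_1, \delta_0) = \epsilon_1$) is something the paper uses only tacitly, and it holds here because $\epsilon_1$ bounds a Hausdorff distance.
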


\begin{proof}
First check $t=1$. In that case,
\[
\delta_1 
\;=\; 
\widehat{\epsilon}_1 + \max(\epsilon_1,\,\delta_0)
\;=\;
\widehat{\epsilon}_1 + \epsilon_1
\;=\;
\epsilon_1 \;+\; \sum_{j=1}^1 \widehat{\epsilon}_j
\;=\;
\max_{1 \,\le\, k \,\le\, 1}
\bigl(\epsilon_k + \widehat{\epsilon}_k \bigr).
\]
Assume for some $t\ge1$ that
\[
\delta_{t-1}
\;=\;
\max_{1 \,\le\, k \,\le\, t-1}
\biggl(\,\epsilon_k \;+\; \sum_{j=k}^{t-1} \widehat{\epsilon}_j\biggr).
\]
Then
\[
\delta_t 
\;=\;
\widehat{\epsilon}_t + \max\bigl(\epsilon_t,\;\delta_{t-1}\bigr)
\;=\;
\max\Bigl(\epsilon_t + \widehat{\epsilon}_t,\;
\delta_{t-1} \;+\; \widehat{\epsilon}_t\Bigr).
\]
Inserting the inductive hypothesis into the second term,
\[
\delta_{t-1} + \widehat{\epsilon}_t
\;=\;
\max_{1 \,\le\, k \,\le\, t-1}
\biggl(\,\epsilon_k \;+\; \sum_{j=k}^{t-1} \widehat{\epsilon}_j\biggr)
\;+\;
\widehat{\epsilon}_t
\;=\;
\max_{1 \,\le\, k \,\le\, t-1}
\biggl(\,\epsilon_k \;+\; \sum_{j=k}^{t} \widehat{\epsilon}_j\biggr).
\]
Hence
\[
\delta_t
\;=\;
\max\Bigl(\,
\epsilon_t + \widehat{\epsilon}_t,\;
\max_{1 \,\le\, k \,\le\, t-1}
\bigl(\epsilon_k + \sum_{j=k}^{t} \widehat{\epsilon}_j\bigr)
\Bigr)
\;=\;
\max_{1 \,\le\, k \,\le\, t}
\biggl(\epsilon_k + \sum_{j=k}^{t} \widehat{\epsilon}_j\biggr).
\]
This completes the inductive argument and establishes \eqref{eq:delta_closed_form}.
\end{proof}

\begin{proof}[Proof of Theorem \ref{thm:meta_greedy_error}]
By Theorem \ref{thm:inductive_meta_greedy_bound}, 
\begin{equation}
H(\cO, \; \cM_T) \leq \epsilon_o + \delta_T
\end{equation}
where 
\begin{equation}
\delta_t = 
\widehat{\epsilon}_t + \max(\epsilon_t, \delta_{t-1})
\end{equation}
for $t=1, \dots, T$ with $\delta_0 = 0$. Therefore, by Proposition \ref{prop:inductive_error}, we complete the proof.
\end{proof}

\newpage

\section{Additional results}

\begin{figure*}[ht]
\centering
\subfigure[]{%
    \includegraphics[width=0.32\linewidth]{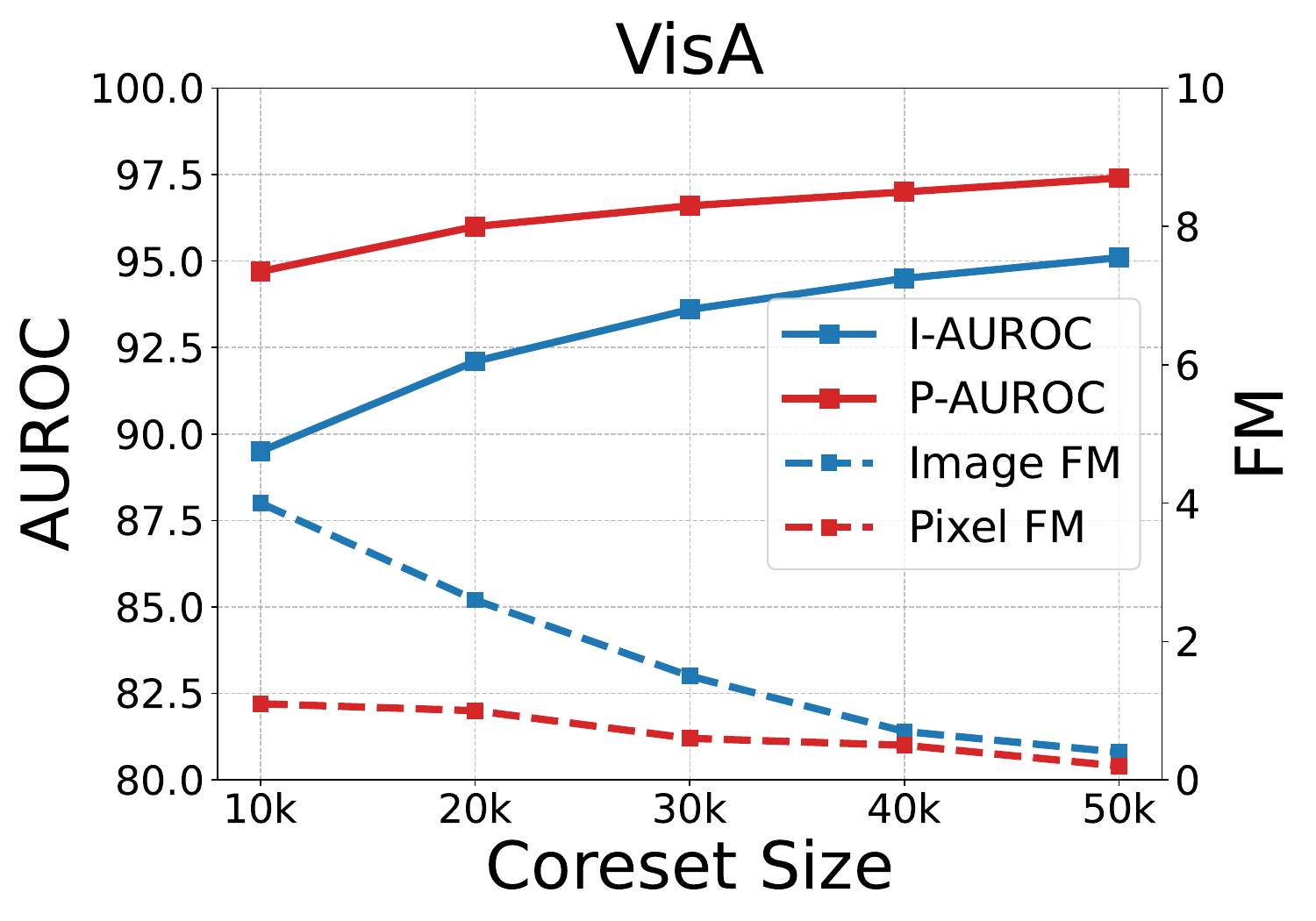}%
}
\subfigure[]{%
    \includegraphics[width=0.32\linewidth]{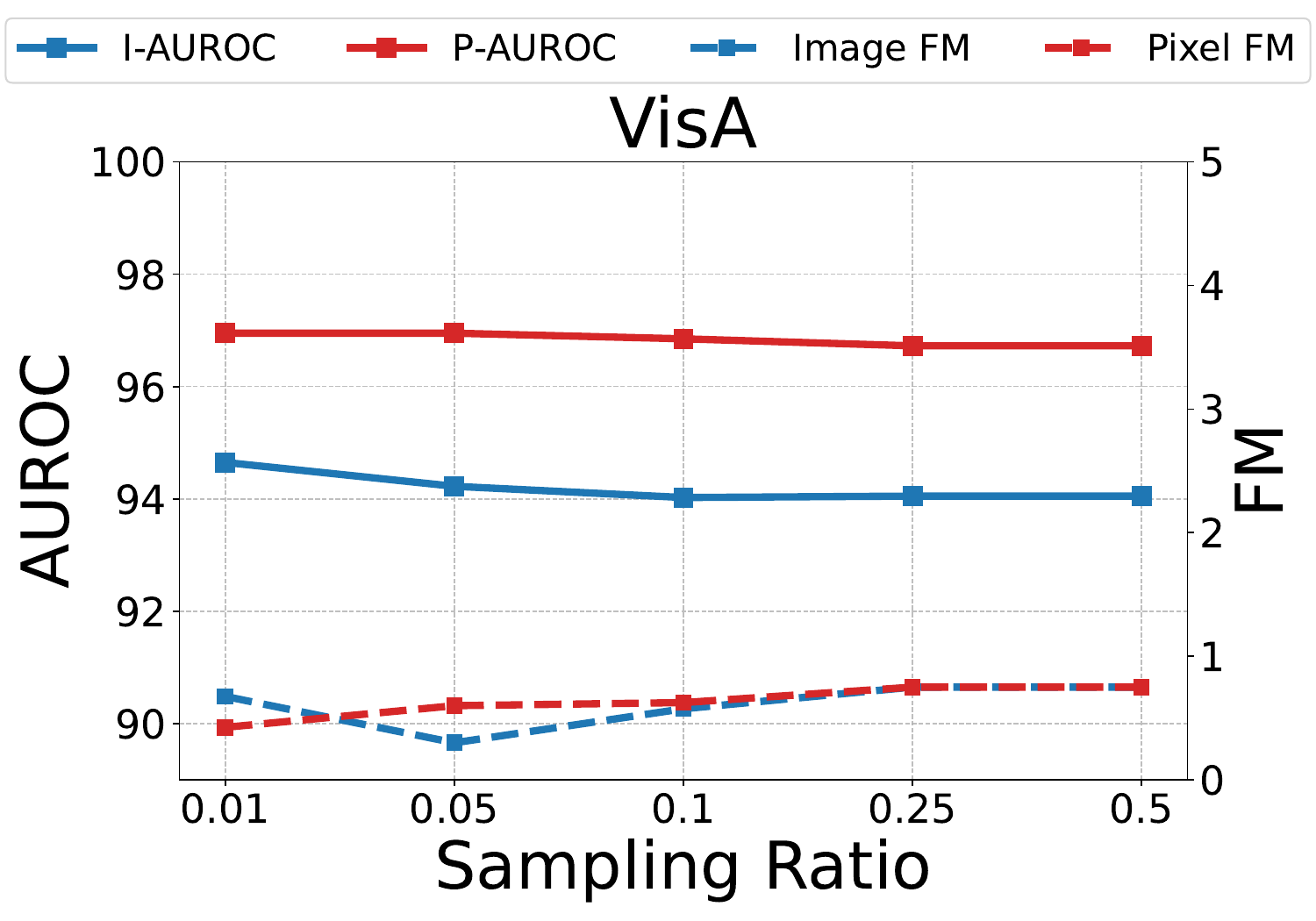}%
}
\subfigure[]{%
    \includegraphics[width=0.32\linewidth]{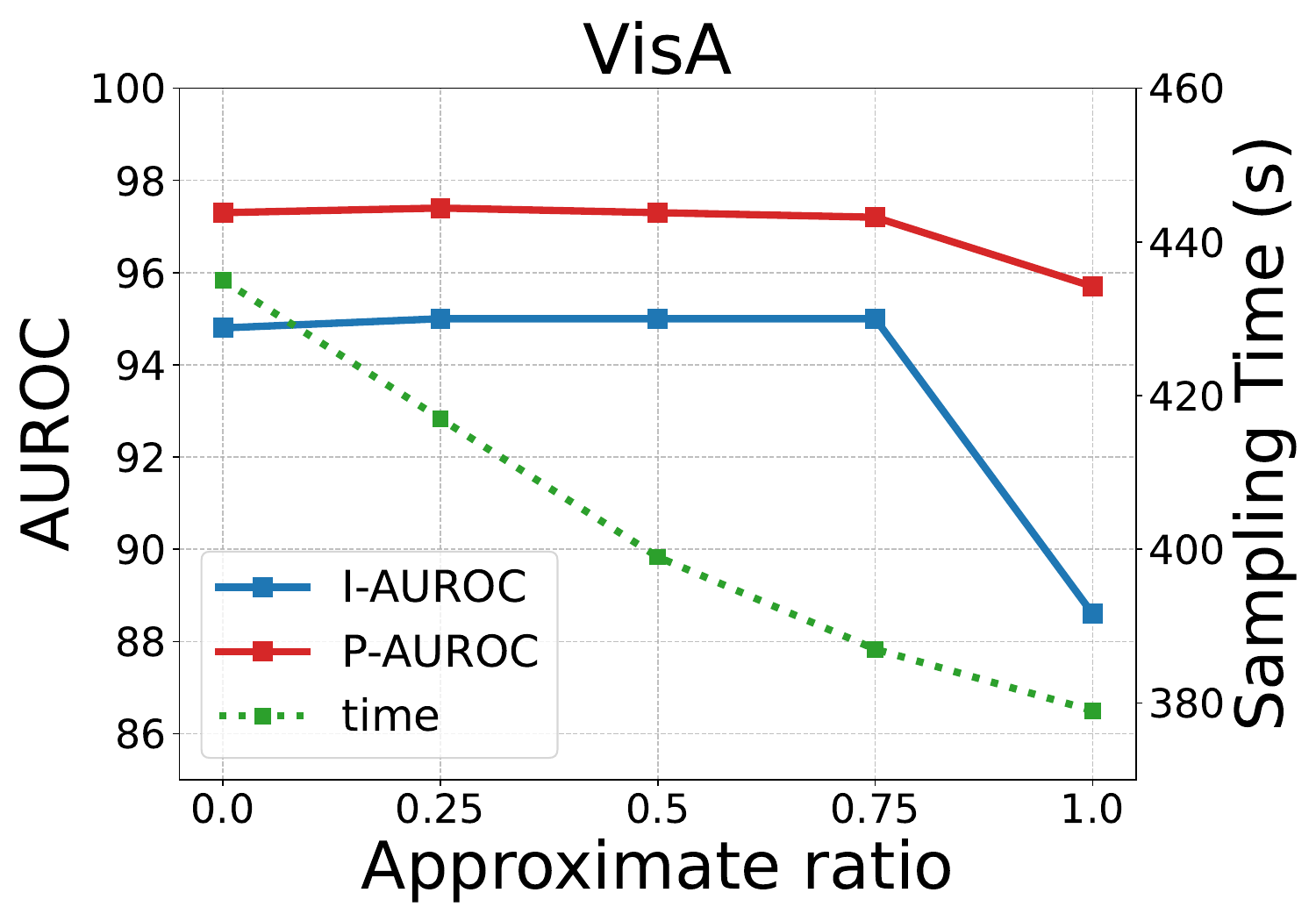}%
}
\caption{
Ablation results of \ours on VisA by varying coreset size $m$ (left), sampling ratio $p$ (middle), approximation ratio $q$ (right).
}
\label{fig:ablation_visa}
\end{figure*}

\begin{table*}[t]
\centering
\caption{The results of MVTecAD dataset on few step incremental schedules where we report task mean average precision (AP) ($\uparrow$) and forgetting measurement (FM) ($\downarrow$). The best and second best results are marked in \textbf{bold} and \ul{underlined}, respectively.
}
\label{table:mvtec_ap}
\resizebox{0.9\linewidth}{!}{
\begin{tabular}{r@{\hspace{2mm}}c@{\hspace{2mm}}c@{}c@{\hspace{2mm}}c@{}c@{\hspace{2mm}}c@{}c@{\hspace{2mm}}c@{}c@{\hspace{2mm}}c}
\toprule
\multicolumn{1}{c}{\multirow{2.5}{*}{Method}} 
& \multicolumn{2}{c}{\textbf{1 × 15 with 15 Steps}} 
& \multicolumn{2}{c}{\textbf{3 × 5 with 5 Steps}} 
& \multicolumn{2}{c}{\textbf{10 -- 1 × 5 with 6 Steps}} 
& \multicolumn{2}{c}{\textbf{10 -- 5 with 2 Steps}} 
& \multicolumn{2}{c}{\textbf{14 -- 1 with 2 Steps}} 
\\ 
\cmidrule(r){2-3} \cmidrule(r){4-5} \cmidrule(r){6-7} \cmidrule(r){8-9} \cmidrule(r){10-11}
& \textbf{AP ($\uparrow$)} & \textbf{FM ($\downarrow$)} & \textbf{AP ($\uparrow$)} & \textbf{FM ($\downarrow$)} & \textbf{AP ($\uparrow$)} & \textbf{FM ($\downarrow$)} & \textbf{AP ($\uparrow$)}& \textbf{FM ($\downarrow$)} & \textbf{AP ($\uparrow$)}& \textbf{FM ($\downarrow$)}\\
\midrule
DNE$\;_\text{ACMMM'22}$ \cite{dne}  
& 75.9 & 10.2 
& 76.4 & 4.4 
& 71.4 & 9.8 
& 77.0 & 2.6 
& 76.0 & 1.9 
\\ 
UCAD$\;_\text{AAAI'24}$ \cite{ucad}
& \ul{95.3} & \textbf{0.0} 
& 93.1  &  2.0 
& 95.5 &  \ul{0.1} 
& 95.0& 1.0
& 95.8  & 0.3  
\\
IUF$\;_\text{ECCV'24}$ \cite{iuf} 
& 84.1 & 2.0 
& 91.1  & 2.9 
& 95.3  & 0.2 
& 95.4  & 1.9  
& 97.8 & 0.3 
\\
CDAD$\;_\text{CVPR'25}$ \cite{cdad} 
& 86.3 & 8.2 
& \ul{95.8} & \ul{1.9} 
& \ul{98.4} & \textbf{0.0} 
& \ul{98.3} & \ul{0.6} 
& \ul{98.4} & \ul{0.1} 
\\
\midrule
\rowcolor{red!25}
\ourso 
& \textbf{99.3} & \ul{0.3} 
& \textbf{99.3}  & \textbf{0.2} 
& \textbf{99.7} & \textbf{0.0} 
& \textbf{99.5}  & \textbf{0.1}  
& \textbf{99.4}  & \textbf{0.0} 
\\ 
\bottomrule
\end{tabular}
}

\end{table*}

\begin{table*}[t]
\centering
\caption{The results of MVTecAD dataset on few step incremental schedules where we report task average PRO ($\uparrow$) and forgetting measurement (FM) ($\downarrow$). The best and second best results are marked in \textbf{bold} and \ul{underlined}, respectively.
}
\label{table:mvtec_pro}
\resizebox{0.9\linewidth}{!}{
\begin{tabular}{r@{\hspace{2mm}}c@{}c@{}c@{}c@{}c@{}c@{}c@{}c@{}c@{}c}
\toprule
\multicolumn{1}{c}{\multirow{2.5}{*}{Method}} 
& \multicolumn{2}{c}{\textbf{1 × 15 with 15 Steps}} 
& \multicolumn{2}{c}{\textbf{3 × 5 with 5 Steps}} 
& \multicolumn{2}{c}{\textbf{10 -- 1 × 5 with 6 Steps}} 
& \multicolumn{2}{c}{\textbf{10 -- 5 with 2 Steps}} 
& \multicolumn{2}{c}{\textbf{14 -- 1 with 2 Steps}} 
\\ 
\cmidrule(r){2-3} \cmidrule(r){4-5} \cmidrule(r){6-7} \cmidrule(r){8-9} \cmidrule(r){10-11}
& \textbf{AUPRO ($\uparrow$)} & \textbf{FM ($\downarrow$)} & \textbf{AUPRO ($\uparrow$)} & \textbf{FM ($\downarrow$)} & \textbf{AUPRO ($\uparrow$)} & \textbf{FM ($\downarrow$)} & \textbf{AUPRO ($\uparrow$)}& \textbf{FM ($\downarrow$)} & \textbf{AUPRO ($\uparrow$)}& \textbf{FM ($\downarrow$)}\\
\midrule
\midrule 
UCAD$\;_\text{AAAI'24}$ \cite{ucad}
& \ul{87.6} & \textbf{-0.8}    
& {71.1}    & {7.5}
& {80.8} & \ul{1.2} 
& 80.7& 2.9    
& 86.3& 1.2 
\\
IUF$\;_\text{ECCV'24}$ \cite{iuf} 
& 78.7 & 6.8
& 72.9 & 5.8 
& 84.3 & 2.4 
& 85.0  & 3.2
& {88.6}    & 0.6    
\\
CDAD$\;_\text{CVPR'25}$ \cite{cdad} 
& 68.5 & 22.1  
& \ul{83.8} &  \ul{4.1} 
& \ul{89.2} & \ul{1.2} 
& \ul{88.9} & \ul{2.6} 
& \ul{89.8} & \ul{0.5} 
\\ 
\midrule
\rowcolor{red!25}
\ourso 
& \textbf{99.3} & \ul{0.3} 
& \textbf{96.3}  & \textbf{0.2}  
& \textbf{99.7} & \textbf{0.0} 
& \textbf{96.0}  & \textbf{0.2}  
& \textbf{96.5}  & \textbf{0.0} 
\\

\bottomrule
\end{tabular}
}
\end{table*}

\begin{table*}[t]
\centering
\caption{The results of VisA dataset on few step incremental schedules where we report task mean average precision (AP) ($\uparrow$) and forgetting measurement (FM) ($\downarrow$). The best and second best results are marked in \textbf{bold} and \ul{underlined}, respectively.
}
\label{table:visa_ap}
\resizebox{0.9\linewidth}{!}{
\begin{tabular}{rcccccccc}
\toprule
\multicolumn{1}{c}{\multirow{2.5}{*}{Method}} 
& \multicolumn{2}{c}{\textbf{1 × 12 with 12 Steps}}  
& \multicolumn{2}{c}{\textbf{8 -- 1 × 4 with 5 Stepss}} 
& \multicolumn{2}{c}{\textbf{8 -- 4 with 2 Steps}} 
& \multicolumn{2}{c}{\textbf{11 -- 1 with 2 Steps}} 
\\ 
\cmidrule(r){2-3} \cmidrule(r){4-5} \cmidrule(r){6-7} \cmidrule(r){8-9} 
& \textbf{AP ($\uparrow$)} & \textbf{FM ($\downarrow$)} & \textbf{AP ($\uparrow$)} & \textbf{FM ($\downarrow$)} & \textbf{AP ($\uparrow$)} & \textbf{FM ($\downarrow$)} & \textbf{AP ($\uparrow$)}& \textbf{FM ($\downarrow$)}\\
\midrule
DNE$\;_\text{ACMMM'22}$ \cite{dne} 
& 65.1 & 11.6
& 66.1 & \textbf{1.3} 
& 67.3 & 9.8 
& 70.8 & 1.6 
\\

UCAD$\;_\text{AAAI'24}$ \cite{ucad} 
& 81.7 & \textbf{0.0} 
& {82.9} & 2.2 
& {83.2} & 5.2 
& {88.1} & 0.3 
\\

IUF$\;_\text{ECCV'24}$ \cite{iuf} 
& \ul{88.8} & 1.2 
& 83.0 & 6.9 
& {83.4} & 7.5 
& \ul{91.6}& \ul{-0.0}
\\

CDAD$\;_\text{CVPR'25}$ \cite{cdad}
& 67.6 & 14.0 
& \ul{84.7}& \ul{1.8} 
& \ul{85.3}& \ul{3.1} 
& 89.4 & \textbf{-0.8} 
\\ 

\midrule
\rowcolor{red!25}
\ourso 
& \textbf{96.5} & \ul{0.1}  
& \textbf{94.1} & \ul{1.8} 
& \textbf{96.7} & \textbf{0.4} 
& \textbf{94.8} & 0.6 
\\ 
\bottomrule
\end{tabular}
}

\end{table*}

\begin{table*}[t]
\centering
\caption{The results of VisA dataset on few step incremental schedules where we report task average PRO ($\uparrow$) and forgetting measurement (FM) ($\downarrow$). The best and second best results are marked in \textbf{bold} and \ul{underlined}, respectively.
}
\label{table:visa_pro}
\resizebox{0.9\linewidth}{!}{
\begin{tabular}{rcccccccc}
\toprule
\multicolumn{1}{c}{\multirow{2.5}{*}{Method}} 
& \multicolumn{2}{c}{\textbf{1 × 12 with 12 Steps}}  
& \multicolumn{2}{c}{\textbf{8 -- 1 × 4 with 5 Stepss}} 
& \multicolumn{2}{c}{\textbf{8 -- 4 with 2 Steps}} 
& \multicolumn{2}{c}{\textbf{11 -- 1 with 2 Steps}} 
\\ 
\cmidrule(r){2-3} \cmidrule(r){4-5} \cmidrule(r){6-7} \cmidrule(r){8-9} 
& \textbf{AUPRO ($\uparrow$)} & \textbf{FM ($\downarrow$)} & \textbf{AUPRO ($\uparrow$)} & \textbf{FM ($\downarrow$)} & \textbf{AUPRO ($\uparrow$)} & \textbf{FM ($\downarrow$)} & \textbf{AUPRO ($\uparrow$)}& \textbf{FM ($\downarrow$)}\\
\midrule
UCAD$\;_\text{AAAI'24}$ \cite{ucad} 
& 80.0 & \ul{0.5} 
& {70.5}& {9.8} 
& {72.4}& 7.5   
& {80.4}& 2.0   
\\
IUF$\;_\text{ECCV'24}$ \cite{iuf} 
& \ul{83.3} & 4.7 
& 57.0& 24.0 
& 63.9  & 20.8   
& \ul{82.0} & {1.0} 
\\
CDAD$\;_\text{CVPR'25}$ \cite{cdad}
& 59.0 & 17.0 
& \ul{77.7} & \textbf{1.7} 
& \ul{78.9} & \ul{1.6}  
& 81.6 & \textbf{-0.2} 
\\
\midrule
\rowcolor{red!25}
\ourso 
& \textbf{96.5} & \textbf{0.1}  
& \textbf{94.1} & \ul{1.8} 
& \textbf{96.7} & \textbf{0.4}
& \textbf{94.8} & \ul{0.6} 
\\ 
\bottomrule
\end{tabular}
}
\end{table*}

\paragraph{Additional quantitative results} We compare \ours with the SOTA baselines in terms of task mean image-level Average Precision (AP) and pixel-level Area Under the Per-Region-Overlap (AUPRO). We report the AP and AUPRO on the MVTecAD dataset in Tab.~\ref{table:mvtec_ap} and Tab.~\ref{table:mvtec_pro}, respectively. We also report the AP and AUPRO on the VisA dataset in  Tab.~\ref{table:visa_ap} and Tab.~\ref{table:visa_pro}, respectively. All the aforesaid tables show that \ours outperform the SOTA baselines in most cases.

\paragraph{Additional ablation results} We condcut additional ablation experiments with VisA dataset with results in Fig.~\ref{fig:ablation_visa} which shows similar aspects as the ablation study on MVTecAD dataset: larger coreset size showing better rsults, sampling ratio doesn't show critical effect, and moderately high approximate ratio saves time with negligible performance drop.

\paragraph{Comparing with different sampling methods} We compare our proposed sampling with Reservoir and Herding. Reservoir sampling is uniform (density, not coverage) and Herding matches the mean and needs class labels, absent in unsupervised CAD, so we adapt it with task means. Loss-based coreset selection is undefined without a training loss. Our contribution is making this coverage objective continuable under fixed memory and unknown $T$, with the additive-error guarantee of Theorem.~\ref{theorem}. The continuation is non-trivial beyond naive repeated greedy downsampling. Expansion conditions on the current coreset via the base set, so only $n_t$ coverage-adding features enter memory before consolidation, the two-step structure Theorem~4.1 analyzes. We ran the requested fixed-memory baselines with identical budget ($m{=}20$k), features, and inference shown in Tab.~\ref{table:reb_sampling}, averaged over all schedules.

\begin{table*}[t]
\centering
\caption{Full restuls of ablation on different backbones on MVTecAD dataset.}
\label{table:abl_bbone_mvtec_all}
\resizebox{0.9\linewidth}{!}{
\begin{tabular}{r@{\hspace{2mm}}c@{\hspace{1mm}}c@{\hspace{1mm}}c@{\hspace{1mm}}c@{\hspace{1mm}}c@{\hspace{1mm}}c@{\hspace{1mm}}c@{\hspace{1mm}}c@{\hspace{1mm}}c@{\hspace{1mm}}c}
\toprule
\multicolumn{1}{c}{\multirow{2.5}{*}{Backbone}} 
& \multicolumn{2}{c}{\textbf{1 × 15 with 15 Steps}} 
& \multicolumn{2}{c}{\textbf{3 × 5 with 5 Steps}} 
& \multicolumn{2}{c}{\textbf{10 -- 1 × 5 with 6 Steps}} 
& \multicolumn{2}{c}{\textbf{10 -- 5 with 2 Steps}} 
& \multicolumn{2}{c}{\textbf{14 -- 1 with 2 Steps}} 
\\ 
\cmidrule(r){2-3} \cmidrule(r){4-5} \cmidrule(r){6-7} \cmidrule(r){8-9} \cmidrule(r){10-11}
& \textbf{AUROC ($\uparrow$)} & \textbf{FM ($\downarrow$)} & \textbf{AUROC ($\uparrow$)} & \textbf{FM ($\downarrow$)} & \textbf{AUROC ($\uparrow$)} & \textbf{FM ($\downarrow$)} & \textbf{AUROC ($\uparrow$)}& \textbf{FM ($\downarrow$)} & \textbf{AUROC ($\uparrow$)}& \textbf{FM ($\downarrow$)}\\
\midrule
DenseNet201 & 98.1 / 96.8 & 0.8 / 0.3 & 95.7 / 95.1 & 1.7 / 1.0 & 98.4 / 95.7 & 0.5 / 0.2 & 95.5 / 93.3 & 1.2 / 0.6 & 97.0 / 96.1 & 0.0 / 0.0 \\ 
DINOv2 & 97.6 / 94.3 & 0.0 / 0.0 & 97.4 / 93.9 & -0.0 / 0.0 & 98.5 / 92.3 & 0.0 / 0.0 & 97.6 / 92.8 & 0.0 / 0.0 & 98.2 / 92.4 & 0.0 / 0.0 \\ 
EfficientNet b1 & 95.2 / 95.0 & 0.0 / 0.0 & 95.5 / 94.2 & 0.0 / -0.0 & 96.7 / 94.2 & 0.0 / 0.0 & 96.1 / 94.0 & 0.0 / -0.0 & 96.6 / 94.8 & 0.0 / 0.0 \\ 
WideResNet101 & 99.0 / 97.8 & 0.1 / 0.1 & 97.2 / 96.7 & 0.9 / 0.6 & 99.0 / 96.9 & 0.3 / 0.2 & 98.4 / 96.8 & 0.7 / 0.6 & 98.8 / 97.6 & 0.1 / 0.0 \\ 
\bottomrule
\end{tabular}
}
\end{table*}

\begin{table*}[t]
\centering
\caption{Full restuls of ablation on different backbones on VisA dataset.}
\label{table:abl_bbone_visa_all}
\resizebox{0.9\linewidth}{!}{
\begin{tabular}{rcccccccc}
\toprule
\multicolumn{1}{c}{\multirow{2.5}{*}{Backbone}} 
& \multicolumn{2}{c}{\textbf{1 × 12 with 12 Steps}}  
& \multicolumn{2}{c}{\textbf{8 -- 1 × 4 with 5 Stepss}} 
& \multicolumn{2}{c}{\textbf{8 -- 4 with 2 Steps}} 
& \multicolumn{2}{c}{\textbf{11 -- 1 with 2 Steps}} 
\\ 
\cmidrule(r){2-3} \cmidrule(r){4-5} \cmidrule(r){6-7} \cmidrule(r){8-9} 
& \textbf{AUROC ($\uparrow$)} & \textbf{FM ($\downarrow$)} & \textbf{AUROC ($\uparrow$)} & \textbf{FM ($\downarrow$)} & \textbf{AUROC ($\uparrow$)} & \textbf{FM ($\downarrow$)} & \textbf{AUROC ($\uparrow$)}& \textbf{FM ($\downarrow$)}\\
\midrule
DenseNet201 & 91.8 / 96.9 & 2.4 / 0.5 & 94.4 / 96.6 & 2.2 / 0.4 & 91.0 / 95.2 & 3.0 / 0.6 & 93.6 / 96.9 & 1.1 / 0.2 \\ 	
DINOv2 & 92.8 / 95.0 & 0.0 / 0.0 & 94.2 / 94.3 & 0.1 / 0.0 & 93.3 / 94.5 & -0.0 / -0.0 & 94.3 / 96.5 & 0.0 / -0.0 \\ 
EfficientNet b1 & 88.3 / 95.3 & 0.0 / 0.0 & 91.0 / 96.1 & 0.0 / 0.0 & 88.2 / 95.3 & 0.2 / -0.0 & 90.8 / 96.3 & 0.0 / -0.0 \\ 
WideResNet101 & 93.8 / 97.4 & 1.5 / 0.5 & 96.5 / 97.4 & 0.8 / 0.4 & 92.7 / 96.8 & 2.4 / 1.1 & 94.6 / 97.4 & -0.0 / 0.0 \\ 
\bottomrule
\end{tabular}
}
\end{table*}

\begin{table*}[t]
\centering
\caption{Full results of ablation on corset size $m$ on MVTecAD dataset.}
\label{table:abl_coreset_size_mvtec_all}
\resizebox{0.9\linewidth}{!}{
\begin{tabular}{r@{\hspace{2mm}}c@{\hspace{1mm}}c@{\hspace{1mm}}c@{\hspace{1mm}}c@{\hspace{1mm}}c@{\hspace{1mm}}c@{\hspace{1mm}}c@{\hspace{1mm}}c@{\hspace{1mm}}c@{\hspace{1mm}}c}
\toprule
\multicolumn{1}{c}{\multirow{2.5}{*}{Coreset Size $m$}} 
& \multicolumn{2}{c}{\textbf{1 × 15 with 15 Steps}} 
& \multicolumn{2}{c}{\textbf{3 × 5 with 5 Steps}} 
& \multicolumn{2}{c}{\textbf{10 -- 1 × 5 with 6 Steps}} 
& \multicolumn{2}{c}{\textbf{10 -- 5 with 2 Steps}} 
& \multicolumn{2}{c}{\textbf{14 -- 1 with 2 Steps}} 
\\ 
\cmidrule(r){2-3} \cmidrule(r){4-5} \cmidrule(r){6-7} \cmidrule(r){8-9} \cmidrule(r){10-11}
& \textbf{AUROC ($\uparrow$)} & \textbf{FM ($\downarrow$)} & \textbf{AUROC ($\uparrow$)} & \textbf{FM ($\downarrow$)} & \textbf{AUROC ($\uparrow$)} & \textbf{FM ($\downarrow$)} & \textbf{AUROC ($\uparrow$)}& \textbf{FM ($\downarrow$)} & \textbf{AUROC ($\uparrow$)}& \textbf{FM ($\downarrow$)}\\
\midrule
5,000 & 94.2 / 93.8 & 4.1 / 2.9 & 90.9 / 94.1 & 3.5 / 1.2 & 96.0 / 94.8 & 1.7 / 0.6 & 93.0 / 94.1 & 2.1 / 0.7 & 96.5 / 95.9 & 0.2 / 0.0 \\ 
10,000 & 94.7 / 96.2 & 4.3 / 1.5 & 95.1 / 95.6 & 1.8 / 0.7 & 98.5 / 95.8 & 0.5 / 0.2 & 97.1 / 95.7 & 0.8 / 1.0 & 97.7 / 96.4 & -0.1 / 0.0 \\ 
15,000 & 98.3 / 97.0 & 0.9 / 0.6 & 95.4 / 96.2 & 1.5 / 0.3 & 99.0 / 96.1 & 0.1 / 0.4 & 97.2 / 96.1 & 1.6 / 0.9 & 98.0 / 96.8 & 0.0 / 0.0 \\ 
20,000 & 98.8 / 97.4 & 0.2 / 0.2 & 98.2 / 96.9 & 0.6 / 0.5 & 98.9 / 96.4 & 0.3 / 0.2 & 97.7 / 96.6 & 0.5 / 0.5 & 98.3 / 97.3 & 0.0 / 0.1 \\ 
25,000 & 99.0 / 97.3 & 0.1 / 0.2 & 97.7 / 96.9 & 0.3 / 0.3 & 98.9 / 96.9 & 0.1 / 0.1 & 98.3 / 97.2 & 0.5 / 0.2 & 98.7 / 97.7 & -0.0 / 0.0 \\ 
\bottomrule
\end{tabular}
}
\end{table*}

\begin{table*}[t]
\centering
\caption{Full results of ablation on corset size $m$ on VisA dataset.}
\label{table:abl_coreset_size_visa_all}
\resizebox{0.9\linewidth}{!}{
\begin{tabular}{rcccccccc}
\toprule
\multicolumn{1}{c}{\multirow{2.5}{*}{Coreset Size $m$}} 
& \multicolumn{2}{c}{\textbf{1 × 12 with 12 Steps}}  
& \multicolumn{2}{c}{\textbf{8 -- 1 × 4 with 5 Stepss}} 
& \multicolumn{2}{c}{\textbf{8 -- 4 with 2 Steps}} 
& \multicolumn{2}{c}{\textbf{11 -- 1 with 2 Steps}} 
\\ 
\cmidrule(r){2-3} \cmidrule(r){4-5} \cmidrule(r){6-7} \cmidrule(r){8-9} 
& \textbf{AUROC ($\uparrow$)} & \textbf{FM ($\downarrow$)} & \textbf{AUROC ($\uparrow$)} & \textbf{FM ($\downarrow$)} & \textbf{AUROC ($\uparrow$)} & \textbf{FM ($\downarrow$)} & \textbf{AUROC ($\uparrow$)}& \textbf{FM ($\downarrow$)}\\
\midrule
10,000 & 86.5 / 94.5 & 6.9 / 2.1 & 93.2 / 95.3 & 3.2 / 1.3 & 89.0 / 94.5 & 3.4 / 1.0 & 92.3 / 96.2 & 0.9 / -0.2 \\ 
20,000 & 90.6 / 96.1 & 4.3 / 1.6 & 95.0 / 96.2 & 1.3 / 1.0 & 92.3 / 95.9 & 1.8 / 1.0 & 94.8 / 96.8 & 0.2 / -0.2 \\ 
30,000 & 92.6 / 96.8 & 2.1 / 0.9 & 95.2 / 96.6 & 2.1 / 0.7 & 93.3 / 96.4 & 1.4 / 0.6 & 94.8 / 97.1 & -0.2 / 0.1 \\ 
40,000 & 93.7 / 97.2 & 0.7 / 0.6 & 96.3 / 97.2 & 0.7 / 0.5 & 93.8 / 96.5 & 1.2 / 0.8 & 96.1 / 97.7 & 0.4 / -0.0 \\ 
50,000 & 94.0 / 97.5 & 0.4 / 0.2 & 97.0 / 97.4 & 0.5 / 0.4 & 94.6 / 97.2 & 0.5 / 0.2 & 96.2 / 97.8 & 0.4 / -0.0 \\ 
\bottomrule
\end{tabular}
}
\end{table*}

\begin{table*}[t]
\centering
\caption{Full results of ablation on sampling ratio $p$ on MVTecAD dataset.}
\label{table:abl_sampling_ratio_mvtec_all}
\resizebox{0.9\linewidth}{!}{
\begin{tabular}{r@{\hspace{2mm}}c@{\hspace{1mm}}c@{\hspace{1mm}}c@{\hspace{1mm}}c@{\hspace{1mm}}c@{\hspace{1mm}}c@{\hspace{1mm}}c@{\hspace{1mm}}c@{\hspace{1mm}}c@{\hspace{1mm}}c}
\toprule
\multicolumn{1}{c}{\multirow{2.5}{*}{Sampling Ratio $p$}} 
& \multicolumn{2}{c}{\textbf{1 × 15 with 15 Steps}} 
& \multicolumn{2}{c}{\textbf{3 × 5 with 5 Steps}} 
& \multicolumn{2}{c}{\textbf{10 -- 1 × 5 with 6 Steps}} 
& \multicolumn{2}{c}{\textbf{10 -- 5 with 2 Steps}} 
& \multicolumn{2}{c}{\textbf{14 -- 1 with 2 Steps}} 
\\ 
\cmidrule(r){2-3} \cmidrule(r){4-5} \cmidrule(r){6-7} \cmidrule(r){8-9} \cmidrule(r){10-11}
& \textbf{AUROC ($\uparrow$)} & \textbf{FM ($\downarrow$)} & \textbf{AUROC ($\uparrow$)} & \textbf{FM ($\downarrow$)} & \textbf{AUROC ($\uparrow$)} & \textbf{FM ($\downarrow$)} & \textbf{AUROC ($\uparrow$)}& \textbf{FM ($\downarrow$)} & \textbf{AUROC ($\uparrow$)}& \textbf{FM ($\downarrow$)}\\
\midrule
0.01 & 98.8 / 97.4 & 0.2 / 0.2 & 98.2 / 96.9 & 0.6 / 0.5 & 98.9 / 96.4 & 0.3 / 0.2 & 97.7 / 96.6 & 0.5 / 0.5 & 98.3 / 97.3 & 0.0 / 0.1 \\ 
0.05 & 97.5 / 96.6 & 1.4 / 1.3 & 96.6 / 96.7 & 1.6 / 0.7 & 98.4 / 96.7 & 0.9 / 0.3 & 97.8 / 96.7 & 0.4 / 0.8 & 97.5 / 97.1 & 0.1 / 0.0 \\ 
0.1 & 97.1 / 96.8 & 1.7 / 1.1 & 96.6 / 96.7 & 1.6 / 0.7 & 98.8 / 96.5 & 0.5 / 0.4 & 97.8 / 96.7 & 0.4 / 0.8 & 97.5 / 97.1 & 0.1 / 0.0 \\ 
0.25 & 97.4 / 96.6 & 1.5 / 1.4 & 96.6 / 96.7 & 1.6 / 0.7 & 98.9 / 96.6 & 0.5 / 0.3 & 97.8 / 96.7 & 0.4 / 0.8 & 97.6 / 97.1 & 0.0 / -0.0 \\ 
0.5 & 97.5 / 96.6 & 1.4 / 1.3 & 96.6 / 96.7 & 1.6 / 0.7 & 98.9 / 96.7 & 0.5 / 0.3 & 97.8 / 96.7 & 0.4 / 0.8 & 97.6 / 97.1 & 0.0 / -0.0 \\ 
\bottomrule
\end{tabular}
}
\end{table*}

\begin{table*}[t]
\centering
\caption{Full results of ablation on sampling ratio on VisA dataset.}
\label{table:abl_sampling_ratio_visa_all}
\resizebox{0.9\linewidth}{!}{
\begin{tabular}{rcccccccc}
\toprule
\multicolumn{1}{c}{\multirow{2.5}{*}{Sampling Ratio $p$}} 
& \multicolumn{2}{c}{\textbf{1 × 12 with 12 Steps}}  
& \multicolumn{2}{c}{\textbf{8 -- 1 × 4 with 5 Stepss}} 
& \multicolumn{2}{c}{\textbf{8 -- 4 with 2 Steps}} 
& \multicolumn{2}{c}{\textbf{11 -- 1 with 2 Steps}} 
\\ 
\cmidrule(r){2-3} \cmidrule(r){4-5} \cmidrule(r){6-7} \cmidrule(r){8-9} 
& \textbf{AUROC ($\uparrow$)} & \textbf{FM ($\downarrow$)} & \textbf{AUROC ($\uparrow$)} & \textbf{FM ($\downarrow$)} & \textbf{AUROC ($\uparrow$)} & \textbf{FM ($\downarrow$)} & \textbf{AUROC ($\uparrow$)}& \textbf{FM ($\downarrow$)}\\
\midrule
0.01 & 93.7 / 97.2 & 0.7 / 0.6 & 96.3 / 97.2 & 0.7 / 0.5 & 93.8 / 96.5 & 1.2 / 0.8 & 96.1 / 97.7 & 0.4 / -0.0 \\ 
0.05 & 92.4 / 96.6 & 2.7 / 1.1 & 96.1 / 97.3 & 1.1 / 0.6 & 93.9 / 97.0 & 1.1 / 0.6 & 95.5 / 97.4 & 0.1 / -0.1 \\ 
0.1 & 91.8 / 96.1 & 3.2 / 1.5 & 95.9 / 97.4 & 1.7 / 0.5 & 93.9 / 97.0 & 1.1 / 0.6 & 95.3 / 97.3 & -0.2 / -0.0 \\ 
0.25 & 91.8 / 95.7 & 3.2 / 2.0 & 95.9 / 97.3 & 1.7 / 0.5 & 93.9 / 97.0 & 1.1 / 0.6 & 95.3 / 97.4 & -0.2 / -0.0 \\ 
0.5 & 91.8 / 95.7 & 3.2 / 2.0 & 95.9 / 97.3 & 1.7 / 0.5 & 93.9 / 97.0 & 1.1 / 0.6 & 95.3 / 97.4 & -0.2 / -0.0 \\  
\bottomrule
\end{tabular}
}
\end{table*}

\begin{table*}[t]
\centering
\caption{Full results of ablation on online continual on MVTecAD dataset.}
\label{table:abl_online_continual_mvtec_all}
\resizebox{0.9\linewidth}{!}{
\begin{tabular}{r@{\hspace{2mm}}c@{\hspace{1mm}}c@{\hspace{1mm}}c@{\hspace{1mm}}c@{\hspace{1mm}}c@{\hspace{1mm}}c@{\hspace{1mm}}c@{\hspace{1mm}}c@{\hspace{1mm}}c@{\hspace{1mm}}c}
\toprule
\multicolumn{1}{c}{\multirow{2.5}{*}{Method}} 
& \multicolumn{2}{c}{\textbf{1 × 15 with 15 Steps}} 
& \multicolumn{2}{c}{\textbf{3 × 5 with 5 Steps}} 
& \multicolumn{2}{c}{\textbf{10 -- 1 × 5 with 6 Steps}} 
& \multicolumn{2}{c}{\textbf{10 -- 5 with 2 Steps}} 
& \multicolumn{2}{c}{\textbf{14 -- 1 with 2 Steps}} 
\\ 
\cmidrule(r){2-3} \cmidrule(r){4-5} \cmidrule(r){6-7} \cmidrule(r){8-9} \cmidrule(r){10-11}
& \textbf{AUROC ($\uparrow$)} & \textbf{FM ($\downarrow$)} & \textbf{AUROC ($\uparrow$)} & \textbf{FM ($\downarrow$)} & \textbf{AUROC ($\uparrow$)} & \textbf{FM ($\downarrow$)} & \textbf{AUROC ($\uparrow$)}& \textbf{FM ($\downarrow$)} & \textbf{AUROC ($\uparrow$)}& \textbf{FM ($\downarrow$)}\\
\midrule
DNE$\;_\text{ACMMM'22}$ \cite{dne} & 76.3 / - & 2.4 / - & 70.9 / - & 2.6 / - & 66.2 / - & 1.0 / - & 67.2 / - & 0.3 / - & 64.7 / - & 0.2 / - \\ 
UCAD$\;_\text{AAAI'24}$ \cite{ucad} & 91.0 / 83.3 & 0.0 / 0.8 & 81.2 / 75.3 & 0.1 / 1.3 & 89.9 / 74.7 & 0.7 / 1.5 & 78.6 / 77.3 & -1.0 / 3.5 & 80.3 / 70.6 & -5.3 / -5.4 \\ 
IUF$\;_\text{ECCV'24}$ \cite{iuf} & 69.4 / 78.2 & 0.1 / 0.2 & 71.0 / 80.3 & 0.4 / 0.1 & 71.9 / 78.9 & 0.8 / 0.1 & 72.1 / 81.7 & 0.0 / 0.1 & 70.8 / 83.4 & 0.5 / -0.2 \\ 
CDAD$\;_\text{CVPR'25}$ \cite{cdad} & 67.8 / 76.0 & 2.7 / 0.0 & 60.4 / 77.0 & 3.7 / 0.2 & 62.4 / 77.5 & 4.7 / 0.0 & 70.5 / 83.1 & -0.9 / 0.1 & 69.9 / 88.2 & 0.0 / 0.0 \\ 
\midrule
\rowcolor{red!25}
\textbf{ContCore} & \textbf{96.7 / 96.7} & \textbf{2.1 / 1.2} & \textbf{96.3 / 96.4} & \textbf{2.5 / 1.3} & \textbf{98.4 / 96.4} & \textbf{0.6 / 0.4} & \textbf{96.8 / 96.5} & \textbf{2.3 / 1.2} & \textbf{98.0 / 97.5} & \textbf{-0.1 / 0.2} \\ 
\bottomrule
\end{tabular}
}
\end{table*}

\begin{table*}[t]
\centering
\caption{Full results of ablation on online continual on VisA dataset.}
\label{table:abl_online_continual_visa_all}
\resizebox{0.9\linewidth}{!}{
\begin{tabular}{rcccccccc}
\toprule
\multicolumn{1}{c}{\multirow{2.5}{*}{Method}} 
& \multicolumn{2}{c}{\textbf{1 × 12 with 12 Steps}}  
& \multicolumn{2}{c}{\textbf{8 -- 1 × 4 with 5 Stepss}} 
& \multicolumn{2}{c}{\textbf{8 -- 4 with 2 Steps}} 
& \multicolumn{2}{c}{\textbf{11 -- 1 with 2 Steps}} 
\\ 
\cmidrule(r){2-3} \cmidrule(r){4-5} \cmidrule(r){6-7} \cmidrule(r){8-9} 
& \textbf{AUROC ($\uparrow$)} & \textbf{FM ($\downarrow$)} & \textbf{AUROC ($\uparrow$)} & \textbf{FM ($\downarrow$)} & \textbf{AUROC ($\uparrow$)} & \textbf{FM ($\downarrow$)} & \textbf{AUROC ($\uparrow$)}& \textbf{FM ($\downarrow$)}\\
\midrule
DNE$\;_\text{ACMMM'22}$ \cite{dne} & 61.7 / 0.0 & 3.7 / 0.0 & 62.8 / 0.0 & 3.2 / 0.0 & 65.6 / 0.0 & 5.7 / 0.0 & 69.7 / 0.0 & -0.3 / 0.0 \\ 
UCAD$\;_\text{AAAI'24}$ \cite{ucad}  & 78.3 / 76.9 & 0.0 / 0.0 & 75.4 / 79.8 & -0.2 / 0.5 & 66.7 / 73.1 & -0.2 / 0.2 & 78.0 / 73.3 & 0.7 / 1.0 \\ 
IUF$\;_\text{ECCV'24}$ \cite{iuf} & 58.0 / 80.4 & 0.2 / -0.0 & 57.0 / 84.8 & 6.0 / 2.7 & 64.9 / 87.1 & -1.4 / 0.9 & 61.4 / 87.7 & -1.2 / 3.2 \\ 
CDAD$\;_\text{CVPR'25}$ \cite{cdad}  & 57.3 / 83.5 & 0.1 / 0.0 & 56.3 / 86.2 & 1.3 / 0.6 & 60.0 / 86.9 & -1.6 / -0.5 & 61.9 / 93.3 & 0.9 / 0.9 \\ 
\midrule
\rowcolor{red!25}
 \textbf{ContCore} & \textbf{91.8 / 96.6} & \textbf{2.3 / 1.1} & \textbf{96.2 / 97.4} & \textbf{0.3 / 0.6} & \textbf{93.1 / 97.0} & \textbf{1.8 / 1.0} & \textbf{95.1 / 97.8} & \textbf{0.8 / 0.3} \\ 
\bottomrule
\end{tabular}
}
\end{table*}

\begin{table*}[t]
\centering
\caption{Full results of ablation on approximate ratio on MVTecAD dataset.}
\label{table:abl_approximate_ratio_mvtec_all}
\resizebox{\linewidth}{!}{
\begin{tabular}{r@{\hspace{2mm}}c@{\hspace{1mm}}c@{\hspace{1mm}}c@{\hspace{1mm}}c@{\hspace{1mm}}c@{\hspace{1mm}}c@{\hspace{1mm}}c@{\hspace{1mm}}c@{\hspace{1mm}}c@{\hspace{1mm}}c@{\hspace{1mm}}c@{\hspace{1mm}}c@{\hspace{1mm}}c@{\hspace{1mm}}c@{\hspace{1mm}}c}
\toprule
\multicolumn{1}{c}{\multirow{2.5}{*}{Approximate Ratio $q$}} 
& \multicolumn{3}{c}{\textbf{1 × 15 with 15 Steps}} 
& \multicolumn{3}{c}{\textbf{3 × 5 with 5 Steps}} 
& \multicolumn{3}{c}{\textbf{10 -- 1 × 5 with 6 Steps}} 
& \multicolumn{3}{c}{\textbf{10 -- 5 with 2 Steps}} 
& \multicolumn{3}{c}{\textbf{14 -- 1 with 2 Steps}} 
\\ 
\cmidrule(r){2-4} \cmidrule(r){5-7} \cmidrule(r){8-10} \cmidrule(r){11-13} \cmidrule(r){14-16}
& \textbf{AUROC ($\uparrow$)} & \textbf{FM ($\downarrow$)} & \textbf{Time ($\downarrow$)} 
& \textbf{AUROC ($\uparrow$)} & \textbf{FM ($\downarrow$)} & \textbf{Time ($\downarrow$)} 
& \textbf{AUROC ($\uparrow$)} & \textbf{FM ($\downarrow$)} & \textbf{Time ($\downarrow$)}
& \textbf{AUROC ($\uparrow$)}& \textbf{FM ($\downarrow$)} & \textbf{Time ($\downarrow$)}
& \textbf{AUROC ($\uparrow$)}& \textbf{FM ($\downarrow$)} & \textbf{Time ($\downarrow$)} \\
\midrule
0 & 98.8 / 97.2 & 0.3 / 0.2 & 95 & 97.2 / 96.7 & 0.5 / 0.3 & 131 & 98.7 / 96.6 & 0.3 / 0.2 & 163 & 97.9 / 96.6 & 1.3 / 0.6 & 172 & 98.4 / 97.5 & -0.0 / 0.0 & 173 \\ 
0.25 & 98.8 / 97.3 & 0.2 / 0.2 & 90 & 97.3 / 96.7 & 0.5 / 0.3 & 129 & 98.8 / 96.5 & 0.4 / 0.2 & 158 & 97.2 / 96.7 & 1.9 / 0.7 & 169 & 98.0 / 97.4 & 0.2 / -0.0 & 171 \\ 
0.5 & 98.3 / 97.2 & 0.8 / 0.2 & 83 & 97.0 / 96.7 & 0.6 / 0.4 & 127 & 98.6 / 96.3 & 0.5 / 0.3 & 151 & 97.5 / 96.7 & 1.5 / 0.6 & 167 & 97.8 / 97.5 & 0.1 / 0.0 & 168 \\ 
0.75 & 98.8 / 97.4 & 0.2 / 0.2 & 79 & 98.2 / 96.9 & 0.6 / 0.5 & 126 & 98.9 / 96.4 & 0.3 / 0.2 & 147 & 97.7 / 96.6 & 0.5 / 0.5 & 165 & 98.3 / 97.3 & 0.0 / 0.1 & 167 \\ 
1 & 96.4 / 96.6 & 0.8 / 0.3 & 75 & 95.1 / 95.2 & 0.7 / 0.4 & 124 & 86.9 / 91.0 & 1.2 / 0.3 & 142 & 94.5 / 94.1 & 1.8 / 1.1 & 164 & 72.6 / 91.7 & 0.7 / 0.0 & 165 \\ 
\bottomrule
\end{tabular}
}
\end{table*}

\begin{table*}[t]
\centering
\caption{Full results of ablation on approximate ratio on VisA dataset.}
\label{table:abl_approximate_ratio_visa_all}
\resizebox{\linewidth}{!}{
\begin{tabular}{rcccccccccccc}
\toprule
\multicolumn{1}{c}{\multirow{2.5}{*}{Approximate Ratio $q$}} 
& \multicolumn{3}{c}{\textbf{1 × 12 with 12 Steps}}  
& \multicolumn{3}{c}{\textbf{8 -- 1 × 4 with 5 Stepss}} 
& \multicolumn{3}{c}{\textbf{8 -- 4 with 2 Steps}} 
& \multicolumn{3}{c}{\textbf{11 -- 1 with 2 Steps}} 
\\ 
\cmidrule(r){2-4} \cmidrule(r){5-7} \cmidrule(r){8-10} \cmidrule(r){11-13} 
& \textbf{AUROC ($\uparrow$)} & \textbf{FM ($\downarrow$)} & \textbf{Time ($\downarrow$)} 
& \textbf{AUROC ($\uparrow$)} & \textbf{FM ($\downarrow$)} & \textbf{Time ($\downarrow$)} 
& \textbf{AUROC ($\uparrow$)} & \textbf{FM ($\downarrow$)} & \textbf{Time ($\downarrow$)}
& \textbf{AUROC ($\uparrow$)}& \textbf{FM ($\downarrow$)} & \textbf{Time ($\downarrow$)} \\
\midrule
0 & 93.3 / 97.2 & 1.2 / 0.5 & 372 & 96.2 / 97.2 & 1.2 / 0.5 & 452 & 94.0 / 97.1 & 2.1 / 0.5 & 458 & 95.7 / 97.5 & -0.1 / -0.1 & 458 \\ 
0.25 & 93.2 / 97.3 & 1.2 / 0.4 & 352 & 96.6 / 97.5 & 1.1 / 0.4 & 432 & 94.3 / 97.2 & 1.5 / 0.2 & 442 & 96.0 / 97.5 & -0.2 / -0.0 & 442 \\ 
0.5 & 93.4 / 97.2 & 1.0 / 0.5 & 333 & 96.7 / 97.3 & 0.8 / 0.5 & 412 & 94.0 / 97.2 & 1.9 / 0.2 & 426 & 96.0 / 97.6 & 0.4 / -0.1 & 427 \\ 
0.75 & 93.7 / 97.2 & 0.7 / 0.6 & 319 & 96.3 / 97.2 & 0.7 / 0.5 & 399 & 93.8 / 96.5 & 1.2 / 0.8 & 414 & 96.1 / 97.7 & 0.4 / -0.0 & 416 \\ 
1 & 90.7 / 96.7 & 3.8 / 1.0 & 311 & 87.1 / 95.8 & 0.9 / 0.5 & 391 & 84.9 / 94.3 & 2.1 / 0.6 & 408 & 91.8 / 96.1 & 2.8 / 0.0 & 408 \\ 
\bottomrule
\end{tabular}
}
\end{table*}

\begin{table}[t]
\centering
\caption{Averaged results of MVTecAD dataset with 5 schedules with different sampling methods with (image/pixel) format.}
\label{table:reb_sampling}
\small
\setlength{\tabcolsep}{15pt}
\resizebox{0.3\linewidth}{!}{
\begin{tabular}{rcc}
\toprule
Method & Task AVG & FM \\
\midrule
Reservoir & 92.2 / 96.5 & 1.4 / 0.3 \\
Herding   & 90.1 / 96.3 & 2.0 / 0.6 \\
\midrule
\rowcolor{red!25}
Ours      & \textbf{98.4 / 96.9} & \textbf{0.3 / 0.3}\\
\bottomrule
\end{tabular}
}
\end{table}

\paragraph{Qualitative results}
\label{sec:segmentation}
We compare the quality of pixel-wise anomaly detection of the SOTA baselines with \ours in Fig.~\ref{fig:segmentation_1_1_14}, IUF \cite{iuf} and CDAD \cite{cdad} show poor results presenting false positive segmentation in normal areas. UCAD \cite{ucad} shows less than IUF and CDAD, yet still produces false positives. As shown in the samples from schedules 10 -- 5 (Fig.~\ref{fig:segmentation_10_5_1}), 10 -- 1$\times$5 (Fig.~\ref{fig:segmentation_10_1_5}), and 14 -- 1 (Fig.~\ref{fig:segmentation_14_1_1}), UCAD shows poor anomaly pixel detection with tasks with multiple classes, while IUF and CDAD show poor results when the number of additional tasks increases for multiple steps as shown in Fig.~\ref{fig:segmentation_1_1_14}, Fig.~\ref{fig:segmentation_10_1_5}, and Fig.~\ref{fig:segmentation_3_3_4}, suffering from catastrophic forgetting. Our propose method \ours, on the other hand, shows the least false positives and predicts the anomaly maps closest to the ground truth comparing with the SOTA baselines.

\begin{figure*}[t]
\centering
\includegraphics[width=\linewidth]{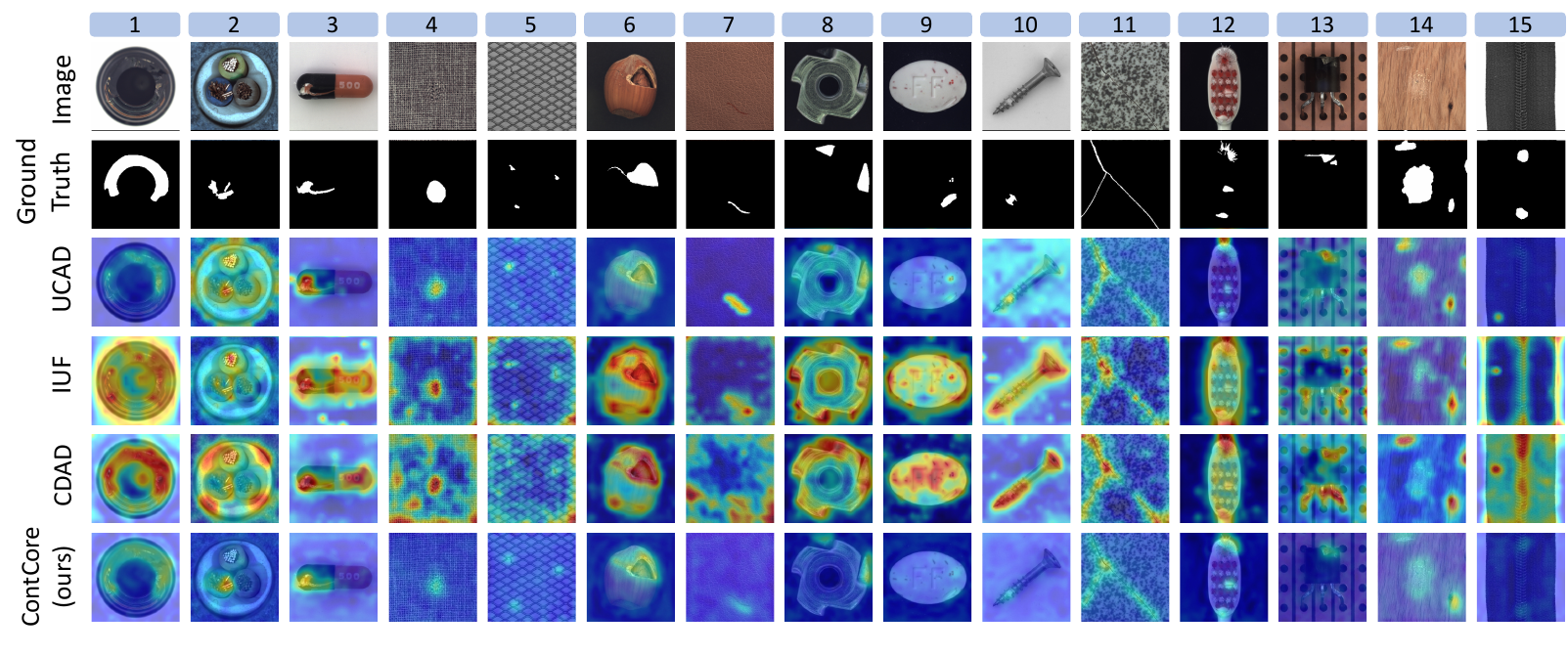}
\caption{
The qualitative results with $1\times15$ task schedule on the MVTecAD dataset.
}
\label{fig:segmentation_1_1_14}
\end{figure*}

\begin{figure*}[t]
\centering
\includegraphics[width=\linewidth]{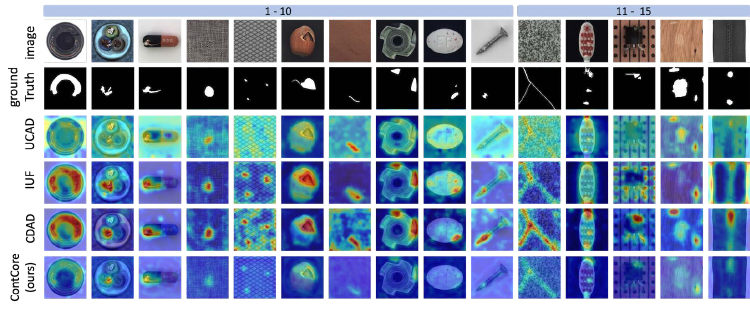}
\caption{
The qualitative results with 10 -- 5 task schedule on the MVTecAD dataset.
}
\label{fig:segmentation_10_5_1}
\end{figure*}

\begin{figure*}[t]
\centering
\includegraphics[width=\linewidth]{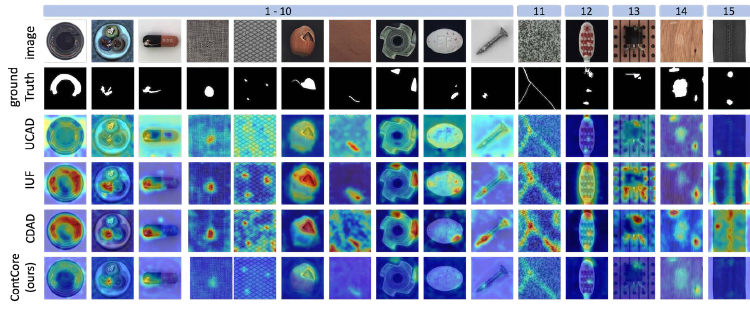}
\caption{
The qualitative results with 10 -- 1$\times$5 task schedule on the MVTecAD dataset.
}
\label{fig:segmentation_10_1_5}
\end{figure*}

\begin{figure*}[t]
\centering
\includegraphics[width=\linewidth]{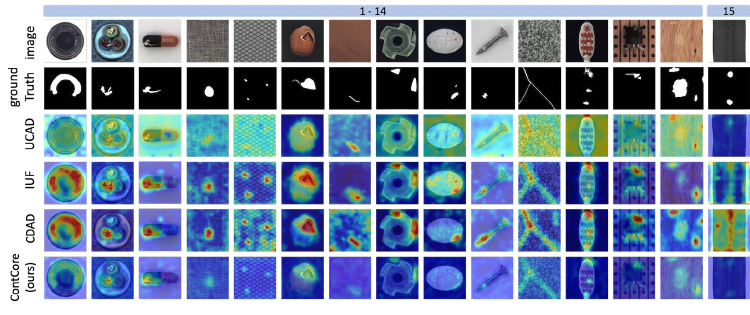}
\caption{
The qualitative results with 14 -- 1 task schedule on the MVTecAD dataset.
}
\label{fig:segmentation_14_1_1}
\end{figure*}

\begin{figure*}[t]
\centering
\includegraphics[width=\linewidth]{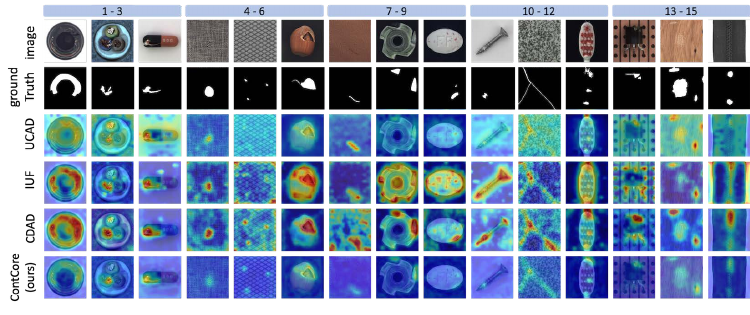}
\caption{
The qualitative results with 3$\times$5 task schedule on the MVTecAD dataset.
}
\label{fig:segmentation_3_3_4}
\end{figure*}

\end{document}